%% file: main_arxiv.tex
\pdfoutput=1
\documentclass{article}
\PassOptionsToPackage{numbers,sort&compress}{natbib}
\usepackage{neurips_arxiv}
\usepackage{amsthm}
\usepackage{multirow}
\usepackage{array}
\usepackage{xspace}
\usepackage{graphicx}
\usepackage{wrapfig}
\usepackage{subcaption}
\usepackage{amsfonts}
\usepackage{amsmath}
\usepackage{algorithm}
\usepackage{algpseudocode}
\usepackage{booktabs}
\usepackage[table]{xcolor}

\definecolor{ourshl}{RGB}{220, 235, 250}
\usepackage{pifont}
\usepackage{amssymb}
\usepackage{contour}
\usepackage{comment}
\contourlength{0.2pt}

\definecolor{softgreen}{RGB}{120,180,120}
\definecolor{softred}{RGB}{210,120,120}
\definecolor{ourshl}{RGB}{205,225,245}

\newcommand{\greencheck}{\contour{black}{\textcolor{softgreen}{\checkmark}}}
\newcommand{\redx}{\contour{black}{\textcolor{softred}{\ding{55}}}}

\newtheorem{theorem}{Theorem}

\newcommand{\ourstight}{\textup{\textsc{TICL}}$^3$}
\newcommand{\tabicl}{\textup{\textsc{TabICL}}\xspace}
\newcommand{\tabiclvtwo}{\textup{\textsc{TabICLv2}}\xspace}
\newcommand{\ours}{\ourstight\xspace}
\newcommand{\oursglobal}{\ourstight{} (global + residual)\xspace}
\newcommand{\oursreg}{\ourstight{} (regional anchors)\xspace}
\newcommand{\oursfull}{\emph{\textbf{T}abular In-Context \textbf{L}ocal \textbf{L}inear \textbf{L}earning}\xspace}

\newcommand{\citeauthorp}[1]{\citep{#1}}
\usepackage{hyperref}
\usepackage{hypernat}

\title{Interpretable by Design Tabular Prediction via In-Context Local Linear Learning}

\author{%
  Ran Eisenberg \\
  Faculty of Engineering \\
  Bar-Ilan University \\
  Ramat Gan, Israel \\
  \And
  Steven Berger \\
  Faculty of Engineering \\
  Bar-Ilan University \\
  Ramat Gan, Israel \\
  \And
  Ofir Lindenbaum \\
  Faculty of Engineering \\
  Bar-Ilan University \\
  Ramat Gan, Israel \\
}

\begin{document}
\maketitle

\begin{abstract}

In-context learning (ICL) lets tabular foundation models make predictions without dataset-specific training or fine-tuning. These models achieve strong predictive accuracy, but interpreting how individual features contribute to the prediction remains difficult. In biology, for example, feature contributions can help identify biomarkers or risk genes. Post-hoc interpretability methods assign feature importance scores, but these scores do not indicate what the model actually uses and can lead to explanation errors. We introduce \ours (\oursfull), an interpretable-by-design predictor that uses a tabular foundation model to guide a simple linear prediction rule. \ours first learns a shared linear model to capture dataset-level feature effects. Then, for each query, it retrieves similar training samples using representations from the foundation model and fits a second linear model to the residuals in that neighborhood. The shared model captures broad effects, while the local model corrects errors among similar samples. Their combined coefficients determine each feature contribution to the final query-specific linear score, so the same model makes and explains the prediction. Across 386 classification benchmark--dataset evaluations, \ours trails \tabiclvtwo by 1.1 percentage points and outperforms CatBoost in 266 evaluations. It improves average support F1 over MAPLE by 4.7 points and achieves the best performance on both real-data selected-feature utility metrics.
\end{abstract}

\section{Introduction}

Making predictions on tabular data remains a challenge in machine learning, where interpretability of individual predictions is often as important as predictive accuracy. Classical algorithms, including sparse linear models, generalized additive models (GAMs), and explainable boosting machines (EBMs)~\citeauthorp{nori2019interpretml}, provide explicit feature contributions, although their structural constraints may limit predictive performance. Recent tabular foundation models~\citeauthorp{hollmann2023tabpfn, qu2025tabicl} adopt a different paradigm by leveraging labeled examples as context to predict new samples without requiring dataset-specific parameter updates. Related in-context regression approaches have also been extended to reinforcement learning~\citeauthorp{schiff2026icrrl}. While these models show strong predictive performance, multiple attention layers complicate attributing individual input feature contributions to each prediction.

Post-hoc interpretability methods such as LIME~\citeauthorp{ribeiro2016lime} and SHAP~\citeauthorp{lundberg2017shap} explain predictors after training. Other approaches include gradient-based attribution methods~\citeauthorp{sundararajan2017integrated,shrikumar2017learning}, FastSHAP~\citeauthorp{jethani2022fastshap}, and learned local surrogate models~\citeauthorp{piaggesi2025illume}. While these methods can provide useful feature importance scores, they do not use these scores to make predictions and may not fully reflect how the model uses each feature.

\ours is introduced as an interpretable-by-design predictor that leverages a tabular foundation model to infer a simple prediction rule. First, we train a shared ElasticNet~\citep{zou2005elastic} to learn a dataset-wide rule based on the foundation model’s predictions. For each query, the foundation model's embedding identifies similar training samples, referred to as a neighborhood. We then fit a second ElasticNet to the residuals between the foundation model’s predictions and the shared rule within this neighborhood. Integrating this local correction with the shared model produces a query-specific linear rule whose coefficients directly determine the final prediction. Figure~\ref{fig:method_overview} summarizes this procedure. In \ours, the foundation model's learned representation guides sample selection for local correction, while the shared and local coefficients specify each feature's contribution to the prediction.

The central idea of \ours is to use the foundation model’s learned representation to inform local correction fitting, with explicit coefficients specifying the prediction process. Table~\ref{tab:method_positioning} compares \ours with existing methods. In \ours, we derive feature contributions directly from the linear model used to compute each class score, rather than from a separate explanation technique. When combined with a baseline, these contributions sum precisely to the class score. The model further employs representations learned through tabular in-context learning (ICL) to select relevant training samples. However, these contributions do not elucidate how individual features influence sample selection or the determination of fitted coefficients. Figure~\ref{fig:selling-point} compares prediction accuracy, explanation quality, and computational cost.

\begin{figure}[t]
  \centering
  \includegraphics[width=\textwidth]{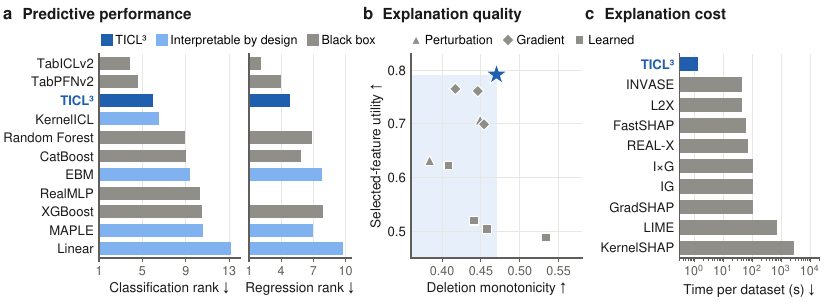}
  \caption{\textbf{Summary of results.} \textbf{(a)} Average per-dataset rank in classification and regression over all methods in Tables~\ref{tab:main} and~\ref{tab:regression}; a subset is shown. \textbf{(b)} Deletion monotonicity and selected-feature utility; the latter averages Gen-SVM and Gen-kmeans, reported separately in Table~\ref{tab:faithfulness}. \textbf{(c)} Per-dataset cost for 50 predictions and explanations, including method-specific explainer construction. All methods explain the same \oursglobal predictor, so its setup is excluded for every method. Online latency is similar for \oursglobal and the learned explainers (about 27~ms per query); the difference comes from explainer construction. LIME and KernelSHAP are extrapolated from the fixed 10-query-per-dataset subset.}
  \label{fig:selling-point}
\end{figure}

\begin{table}[t]
\centering
\scriptsize
\caption{
Conceptual positioning of \ours. \emph{Interpretable by design}: the explanation is part of the deployed predictor; \emph{explicit additive prediction}: prediction sums inspectable feature terms mapped to input columns; \emph{query-specific deployed rule}: an explicit feature-level rule is fitted or selected per test point (sample weights or post-hoc explanations alone do not qualify); \emph{ICL-guided locality}: an ICL representation defines locality. KernelICL uses ICL-derived query weights but no query-specific feature-level rule. ShapPFN sums per-feature contributions computed by its transformer but deploys no explicit feature-level rule; ExplainerPFN estimates attributions without access to a predictor.
}
\label{tab:method_positioning}
\resizebox{\linewidth}{!}{%
\begin{tabular}{p{0.27\columnwidth}cccc}
\toprule
\textbf{Method}
& \textbf{Interpretable by design}
& \textbf{Explicit additive prediction}
& \textbf{Query-specific deployed rule}
& \textbf{ICL-guided locality} \\
\midrule
\tabiclvtwo
& \redx
& \redx
& \redx
& \redx \\

SHAP/LIME on \tabiclvtwo
& \redx
& \redx
& \redx
& \redx \\

ExplainerPFN
& \redx
& \redx
& \redx
& \redx \\

EBM/NAM
& \greencheck
& \greencheck
& \redx
& \redx \\

MAPLE
& \greencheck
& \greencheck
& \greencheck
& \redx \\

LLSPIN
& \greencheck
& \greencheck
& \greencheck
& \redx \\

KernelICL
& \greencheck
& \redx
& \redx
& \greencheck \\

ShapPFN
& \greencheck
& \greencheck
& \redx
& \redx \\

\rowcolor{ourshl}
\oursglobal
& \greencheck
& \greencheck
& \greencheck
& \greencheck \\
\bottomrule
\end{tabular}
}

\vspace{2pt}
\begin{minipage}{0.96\columnwidth}
\scriptsize
\emph{References:} 
\tabiclvtwo~\citeauthorp{qu2026tabiclv2};
SHAP/LIME~\citeauthorp{lundberg2017shap,ribeiro2016lime};
ExplainerPFN~\citeauthorp{fonseca2026explainerpfn};
EBM/NAM~\citeauthorp{nori2019interpretml,agarwal2021nam};
MAPLE~\citeauthorp{plumb2018maple};
LLSPIN~\citeauthorp{yang2022locally};
KernelICL~\citeauthorp{miftachov2026interpretable};
ShapPFN~\citeauthorp{sena2026shappfn}.
\end{minipage}
\end{table}

For each query, we form a single linear model that is composed of the shared and local models. The coefficients, together with the processed feature values, determine each feature’s contribution to the prediction score for a class, and we map these contributions back to the original columns of the table. We use an ElasticNet regularizer to encourage a sparse coefficient vector. The foundation model does not make the final prediction; it provides targets to fit the shared prediction rule and selects the rows to use for each query. The final prediction is made by the explicit prediction rule described above. The error in this local approximation can be bounded in terms of (i) smoothness of the residual, (ii) context shift, (iii) the radius of the neighborhood in the representation space, (iv) distance of the query to the weighted feature centroid, and (v) the amount of regularization.

\begin{figure*}[t]
  \centering
  \includegraphics[width=\textwidth]{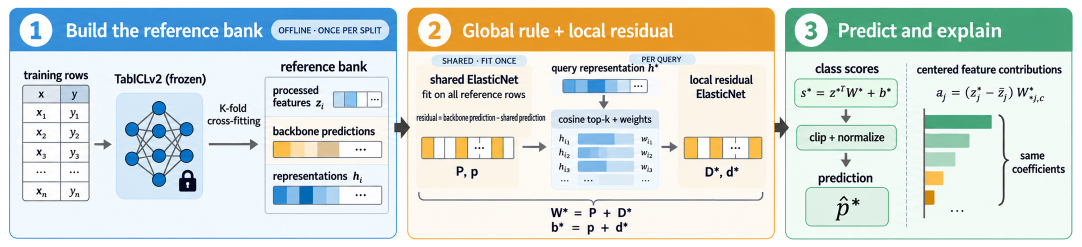}
  \caption{\textbf{Overview of \oursglobal.} Offline we use a frozen \tabiclvtwo{} backbone with cross-fitting to build a reference bank of processed features, class probability targets and learned representations. We fit a shared ElasticNet model to the bank targets. For each query, we then use its representation to select similar reference rows and fit a second ElasticNet model to the shared model's remaining errors on those rows. We add the shared model and the local correction to give a single linear model for the query. We clip and normalize the class scores from this to make the final prediction. We also use the model coefficients and the centered feature values to compute the contribution of each feature to the class scores.}
  \label{fig:method_overview}
\end{figure*}

\section{Background and Related Work}

We group related work into three categories: predictive models, post-hoc explanation methods, and interpretable-by-design models.

\paragraph{Predictive models.}
Tabular ICL models such as TabPFN~\citeauthorp{hollmann2023tabpfn}, TabPFNv2~\citeauthorp{hollmann2025tabpfnv2}, and \tabiclvtwo~\citeauthorp{qu2026tabiclv2} demonstrate strong predictive performance, as do task-specific neural models like TabM and TabR~\citeauthorp{gorishniy2025tabm,gorishniy2024tabr}. Some approaches combine neural and tree-based encoders to learn tabular representations~\citeauthorp{naor2025hybrid}. While these models learn effective representations and prediction functions, they do not provide explicit feature contributions that directly determine predictions.

\paragraph{Post-hoc explanation methods.}
LIME~\citeauthorp{ribeiro2016lime}, SHAP~\citeauthorp{lundberg2017shap}, gradient attributions~\citeauthorp{sundararajan2017integrated,shrikumar2017learning}, and learned explainers clarify the outputs of a fixed predictor. ExplainerPFN~\citeauthorp{fonseca2026explainerpfn} uses a pretrained tabular foundation model to estimate feature attributions for a new dataset without access to the explained predictor. These methods offer feature attributions or local approximations, but their explanations do not determine the final prediction.

\paragraph{Interpretable-by-design models.}
Lasso~\citeauthorp{tibshirani1996lasso}, EBMs~\citeauthorp{nori2019interpretml}, and NAMs~\citeauthorp{agarwal2021nam} use explicit feature contributions within a shared model. LOESS~\citeauthorp{cleveland1988loess} and MAPLE~\citeauthorp{plumb2018maple} fit local linear rules, defining neighborhoods or sample weights by input distances or tree-based representations. LSPIN and its locally linear variant LLSPIN~\citeauthorp{yang2022locally} train a gating network that selects a sparse feature subset for each sample; with a linear prediction network, LLSPIN deploys a sample-specific sparse linear rule. ShapPFN~\citeauthorp{sena2026shappfn} makes a tabular foundation model's prediction additive over per-feature contributions, although each contribution is computed by the transformer rather than by an explicit feature-level rule. Context-dependent feature-selection methods allow selected variables to vary with the observed context~\citeauthorp{sristi2024contextual}. However, these methods do not leverage context-conditioned representations from a pretrained tabular ICL model to fit a query-specific prediction rule.

\ours uses representations from a tabular ICL model to select a neighborhood for each query and fits a local linear correction to the residuals of a shared linear predictor. The combined coefficients directly determine the prediction. For more details on related work, see Appendix~\ref{app:related_work}.

\section{Method}
\label{sec:method}

For a new test query, \ours constructs an adaptive linear prediction rule using representations from a tabular foundation model. The representations allow \ours to identify the training samples that are most relevant to the test query. The coefficients, together with the processed feature values and intercept, determine the class scores and feature contributions. We consider two variants of \ours, \oursglobal and \oursreg. In \oursglobal, we first fit a shared linear prediction model to capture the global effects of features. Then, for each test query, we use the representations to find training samples similar to the query and fit a second linear prediction model to the shared model's residuals on these samples. The final prediction rule for the test query consists of the sum of the shared linear model and the query-specific linear model. In \oursreg, we avoid fitting a new prediction model for each test query by fitting regional prediction models upfront and combining them based on their similarity to the query. See \autoref{par:regional_anchor_variant} for details. We use the terms ``linear model'' and ``linear predictor'' to refer to linear models with an intercept that is not regularized.

\paragraph{Backbone outputs and reference bank.}

Suppose we have a labeled training set $\{(\mathbf{x}_i, y_i)\}_{i=1}^{N}$, where $\mathbf{x}_i \in \mathbb{R}^{D_{\rm raw}}$ is the $i$-th sample's feature vector and $y_i \in \{1, \ldots, C\}$ is its class. We apply a deterministic mapping $\phi: \mathbb{R}^{D_{\rm raw}} \rightarrow \mathbb{R}^D$ to obtain the processed features $\mathbf{z}_i = \phi(\mathbf{x}_i)$. In our experiments, $\phi$ encodes categorical features using ordinal encoding, imputes missing values with the corresponding training-set means, and applies scalar transformations fitted on the training set. We do not expand features, so each retained coordinate of $\mathbf{z}_i$ corresponds to a single coordinate of $\mathbf{x}_i$. See Appendix~\ref{app:preprocessing_mapping} for details about the transformations and how we map feature contributions and recovered supports back to the original columns.

For each sample, our tabular foundation model (backbone) outputs a predicted class-probability vector $\widetilde{\mathbf{y}}_i \in \Delta^{C-1}$ and a representation $\mathbf{h}_i \in \mathbb{R}^{d_h}$, where $\Delta^{C-1} := \{\mathbf{u} \in \mathbb{R}_{\geq 0}^{C} : \mathbf{1}^{\top}\mathbf{u} = 1\}$ is the probability simplex and $\mathbf{1}$ is the all-ones vector. All vectors are column vectors. We use $\|\cdot\|_2$ to denote the $\ell_2$ norm of a vector.

We use stratified cross-fitting to generate the backbone outputs so that a sample's class label is not included in the labeled context used to compute its class-probability vector and representation. Specifically, we use the labeled context from all other folds when generating the backbone outputs for samples in a given fold. We then aggregate the outputs from all folds to construct a reference bank $\{(\mathbf{z}_i, \widetilde{\mathbf{y}}_i, \mathbf{h}_i)\}_{i=1}^{N}$, in which each training sample's prediction and representation are generated without its own label in the context. See Appendix~\ref{app:reference_bank} for more details.

Given a new test input $\mathbf{x}^*$, we transform its features using the same mapping $\mathbf{z}^* = \phi(\mathbf{x}^*)$ and generate its representation $\mathbf{h}^*$ using the tabular foundation model with the entire training set as its labeled context. We fit the mapping $\phi$ using the entire training set and use this fitted mapping for all samples. We only cross-fit the backbone predictions and representations for the training samples.

\subsection{Context-Conditioned Neighborhood}

We use the backbone representations to identify training rows most similar to the test query. Cosine similarity is computed as $s_i(\mathbf{x}^*)=\mathbf h_i^\top\mathbf h^*/(\|\mathbf h_i\|_2\|\mathbf h^*\|_2)$, and the $k$ most similar rows are selected. For a neighborhood size $k\in\mathbb N_{+}$, $\mathcal N^*\subseteq\{1,\ldots,N\}$ denotes the indices of the $k'=\min(k,N)$ largest similarities. This approach maintains a fixed number of rows for each local fit, rather than varying the count based on a similarity threshold. Greater weight is assigned to rows that are more similar to the query. For a temperature parameter $\tau>0$, we define
\begin{equation}
\widetilde w_i^*=
\frac{\exp(s_i(\mathbf{x}^*)/\tau)}
{\sum_{j\in\mathcal N^*}\exp(s_j(\mathbf{x}^*)/\tau)},
\qquad i\in\mathcal N^*.
\label{eq:residual_neighbor_weights}
\end{equation}
The weights are strictly positive and sum to one. Smaller $\tau$ assigns greater weight to the closest neighbors, whereas larger values result in more uniform weighting.

\subsection{Shared Global Predictor}
We fit a global predictor to capture the part of the backbone's predictions that can be explained by a single linear model across the entire dataset. Given the coefficient matrix $\mathbf A \in \mathbb{R}^{D \times C}$, we use the ElasticNet regularizer $\Omega_{\alpha, \rho}(\mathbf A) = \alpha \rho \|\mathbf A\|_1 + \alpha (1 - \rho) \|\mathbf A\|_F^2 / 2$ with regularization strength $\alpha > 0$ and mixing parameter $\rho \in [0, 1]$. Here, $\|\mathbf A\|_1$ denotes the sum of the absolute values of its entries. The $\ell_1$ term encourages sparsity, while the $\ell_2$ term penalizes large coefficients. Specifically, we learn the dataset-wide model $(\mathbf P, \mathbf p)$ by solving
\begin{align*}
(\mathbf P, \mathbf p)
=
\arg\min_{\mathbf A \in \mathbb{R}^{D \times C},\, \mathbf a \in \mathbb{R}^C}
\frac{1}{2N}\sum_{i=1}^N
\|\mathbf A^\top \mathbf z_i + \mathbf a - \widetilde{\mathbf y}_i\|_2^2
+ \Omega_{\alpha, \rho}(\mathbf A),
\end{align*}
where $\mathbf P \in \mathbb{R}^{D \times C}$ is the coefficient matrix, $\mathbf p \in \mathbb{R}^C$ is the unregularized intercept vector, and $\mathbf P^\top \mathbf z + \mathbf p$ is the prediction from the dataset-wide linear model for the processed feature vector $\mathbf z$. This model is fitted once per training split and shared across all test queries.

\subsection{Query-Specific Residual Correction}

The dataset-wide model may be unable to capture all of the variation in the backbone's predictions. To account for this, we train a second model to predict the residual $\mathbf r_i = \widetilde{\mathbf y}_i - (\mathbf P^\top \mathbf z_i + \mathbf p)$ for the reference rows. For a test query $\mathbf x^*$, the query-specific residual correction $(\mathbf D^*, \mathbf d^*)$ is learned by solving
\begin{align}
(\mathbf D^*, \mathbf d^*)
=
\arg\min_{\mathbf D \in \mathbb{R}^{D \times C},\, \mathbf d \in \mathbb{R}^C}
\frac{1}{2}\sum_{i \in \mathcal{N}^*}
\widetilde w_i^*
\|\mathbf D^\top \mathbf z_i + \mathbf d - \mathbf r_i\|_2^2
+ \Omega_{\alpha, \rho}(\mathbf D),
\label{eq:local_residual_elasticnet}
\end{align}
where $\mathbf d^*$ is the unregularized intercept vector. This model is fitted separately for each test query. We use $\mathbf W^* = \mathbf P + \mathbf D^*$ and $\mathbf b^* = \mathbf p + \mathbf d^*$ to define the final linear prediction rule, where $\mathbf P$ contains the shared coefficients and $\mathbf D^*$ contains the query-specific correction learned from the query's neighborhood. This rule outputs a vector with one score per class, given by $(\mathbf W^*)^\top \mathbf z^* + \mathbf b^*$.

We next analyze the error in approximating the backbone's predictions by fitting the residuals left by the dataset-wide model. Let $g: \mathbb{R}^D \rightarrow \mathbb{R}^C$ be the dataset-wide linear model, where $g(\mathbf z) = \mathbf P^\top \mathbf z + \mathbf p$. We let $\boldsymbol\eta(\mathbf x; \mathcal{C})$ and $\mathbf h(\mathbf x; \mathcal{C})$ denote the backbone's class-probability vector and learned query representation, respectively, given a context of labeled examples $\mathcal{C}$. We assume that the representations are nonzero. For each reference row $i$, we let $\boldsymbol\eta_i = \boldsymbol\eta(\mathbf x_i; \mathcal{C}_i) = \widetilde{\mathbf y}_i$ and $\mathbf h_i = \mathbf h(\mathbf x_i; \mathcal{C}_i)$, where $\mathcal{C}_i$ consists of the training examples outside the fold containing $\mathbf x_i$. For a test query $\mathbf x^*$, we let $\boldsymbol\eta^* = \boldsymbol\eta(\mathbf x^*; \mathcal{C}^*)$ and $\mathbf h^* = \mathbf h(\mathbf x^*; \mathcal{C}^*)$, where $\mathcal{C}^*$ consists of all training examples. Let $\mathbf r_i = \boldsymbol\eta_i - g(\mathbf z_i)$ and $\mathbf r^* = \boldsymbol\eta^* - g(\mathbf z^*)$. We normalize the representations as $\widehat{\mathbf h}_i = \mathbf h_i/\|\mathbf h_i\|_2$ and $\widehat{\mathbf h}^* = \mathbf h^*/\|\mathbf h^*\|_2$. We use $\Delta_k(\mathbf x^*)=\max_{i\in\mathcal N^*}\|\widehat{\mathbf h}_i-\widehat{\mathbf h}^*\|_2$ for the representation-space radius of the retrieved neighborhood and $\overline{\mathbf z}_k=\sum_{i\in\mathcal N^*}\widetilde w_i^*\mathbf z_i$ for its weighted centroid in processed-feature space.

\begin{theorem}[Residual approximation bound]
\label{thm:residual_approx}
Assume $\alpha(1-\rho)>0$ and that there are constants $L_r\geq0$ and $\epsilon_{\rm ctx}\geq0$ such that each retrieved residual is close to the query residual:
\[
\|\mathbf r_i-\mathbf r^*\|_2
\leq
L_r
\|\widehat{\mathbf h}_i-\widehat{\mathbf h}^*\|_2
+\epsilon_{\rm ctx},
\qquad i\in\mathcal N^*.
\]
Here $L_r$ measures residual smoothness in representation space, and $\epsilon_{\rm ctx}$ absorbs the shift between outputs produced under the out-of-fold contexts $\mathcal C_i$ and under the full context $\mathcal C^*$. Then the linear prediction before clipping and normalization,
\[
\widehat{\boldsymbol\eta}_{\rm lin}^*
=
g(\mathbf z^*)+
(\mathbf D^*)^\top\mathbf z^*+\mathbf d^*,
\]
satisfies
\begin{equation}
\|\boldsymbol\eta^*-\widehat{\boldsymbol\eta}_{\rm lin}^*\|_2
\leq
\bigl(L_r\Delta_k(\mathbf x^*)+\epsilon_{\rm ctx}\bigr)
\left(
1+
\frac{\|\mathbf z^*-\overline{\mathbf z}_k\|_2}
{\sqrt{\alpha(1-\rho)}}
\right).
\label{eq:residual_approx_bound}
\end{equation}
\end{theorem}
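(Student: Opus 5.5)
The plan is to reduce the error to a statement about the local residual fit alone, split it into an intercept part and a slope part, and bound the slope part by comparing the optimal objective value with that of a trivial competitor. Since $\boldsymbol\eta^*=g(\mathbf z^*)+\mathbf r^*$, we have $\boldsymbol\eta^*-\widehat{\boldsymbol\eta}_{\rm lin}^*=\mathbf r^*-(\mathbf D^*)^\top\mathbf z^*-\mathbf d^*$, so the shared model drops out entirely. Write $\varepsilon:=L_r\Delta_k(\mathbf x^*)+\epsilon_{\rm ctx}$. The hypothesis together with the definition of $\Delta_k(\mathbf x^*)$ gives $\|\mathbf r_i-\mathbf r^*\|_2\le\varepsilon$ for every $i\in\mathcal N^*$.

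First I would eliminate the unregularized intercept in \eqref{eq:local_residual_elasticnet}. For fixed $\mathbf D$, the objective is a weighted least-squares problem in $\mathbf d$, and the weights $\widetilde w_i^*$ sum to one. Setting the gradient in $\mathbf d$ to zero gives $\mathbf d^*=\overline{\mathbf r}_k-(\mathbf D^*)^\top\overline{\mathbf z}_k$, where $\overline{\mathbf r}_k=\sum_{i\in\mathcal N^*}\widetilde w_i^*\mathbf r_i$. Hence the local correction at the query equals $\overline{\mathbf r}_k+(\mathbf D^*)^\top(\mathbf z^*-\overline{\mathbf z}_k)$. By the triangle inequality,
\[
\|\boldsymbol\eta^*-\widehat{\boldsymbol\eta}_{\rm lin}^*\|_2\le\|\mathbf r^*-\overline{\mathbf r}_k\|_2+\|(\mathbf D^*)^\top(\mathbf z^*-\overline{\mathbf z}_k)\|_2 .
\]
The first term is at most $\sum_i\widetilde w_i^*\|\mathbf r^*-\mathbf r_i\|_2\le\varepsilon$, by convexity of the norm and because the weights are positive and sum to one.

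The second term is the main step: it requires a bound on $\|\mathbf D^*\|_F$. I would use optimality against the competitor $(\mathbf D,\mathbf d)=(\mathbf 0,\overline{\mathbf r}_k)$, whose objective value is $\tfrac12\sum_i\widetilde w_i^*\|\mathbf r_i-\overline{\mathbf r}_k\|_2^2$. The weighted mean minimizes the weighted sum of squared deviations, so this value is at most $\tfrac12\sum_i\widetilde w_i^*\|\mathbf r_i-\mathbf r^*\|_2^2\le\tfrac12\varepsilon^2$.

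On the other side, the optimal objective value is at least its ridge part, $\alpha(1-\rho)\|\mathbf D^*\|_F^2/2$, since the data-fit term and the $\ell_1$ term are both nonnegative. Combining the two bounds gives $\|\mathbf D^*\|_F\le\varepsilon/\sqrt{\alpha(1-\rho)}$, which uses $\alpha(1-\rho)>0$.

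Finally, $\|(\mathbf D^*)^\top\mathbf v\|_2\le\|\mathbf D^*\|_{\rm op}\|\mathbf v\|_2\le\|\mathbf D^*\|_F\|\mathbf v\|_2$ with $\mathbf v=\mathbf z^*-\overline{\mathbf z}_k$. So the second term is at most $\varepsilon\|\mathbf z^*-\overline{\mathbf z}_k\|_2/\sqrt{\alpha(1-\rho)}$. Adding the two terms yields \eqref{eq:residual_approx_bound}.

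The only delicate point I expect is the objective comparison. It needs the fact that the ElasticNet penalty is at least its ridge component, and that the intercept competitor $\overline{\mathbf r}_k$ is feasible, which holds because the intercept is unregularized. It also requires the weighted-mean variance inequality to be stated with weights summing to one, which the softmax definition \eqref{eq:residual_neighbor_weights} guarantees.
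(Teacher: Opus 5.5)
Your proposal is correct and follows the paper's proof essentially step for step. The steps match: eliminating the unpenalized intercept via $\mathbf d^*=\overline{\mathbf r}_k-(\mathbf D^*)^\top\overline{\mathbf z}_k$, splitting by the triangle inequality, and bounding $\|\mathbf D^*\|_F\le(L_r\Delta_k(\mathbf x^*)+\epsilon_{\rm ctx})/\sqrt{\alpha(1-\rho)}$ by comparing against the local-constant competitor $(\mathbf 0,\overline{\mathbf r}_k)$ with the weighted-mean variance inequality.
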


The proof can be found in Appendix~\ref{app:residual_approx_proof}. When $\epsilon_{\rm ctx}$ is 0, the condition says that the Euclidean norm of the difference between the residual of the query and of each retrieved row is bounded by the distance between their normalized representations, multiplied by $L_r$. $\epsilon_{\rm ctx}$ allows this residual difference to be larger, to account for differences arising from the labeled contexts used to obtain the backbone outputs.
When $\mathbf D^*$ is all zeroes, the correction is the weighted mean of the neighborhood residuals $\overline{\mathbf r}_k$, and the error is bounded by $L_r \Delta_k(\mathbf x^*) + \epsilon_{\rm ctx}$. The theorem doesn't show that the slope will make the correction better, but it bounds the additional error that the slope may introduce through the ElasticNet regularization. The fitted slope, combined with the shared coefficients, can be used to compute feature contributions.
We can set $g \equiv \mathbf 0$ to get a similar bound for approximating the backbone outputs directly, under the corresponding smoothness assumption on those outputs. Using the shared model might make the term $L_r \Delta_k(\mathbf x^*) + \epsilon_{\rm ctx}$ smaller by reducing $L_r$ or $\epsilon_{\rm ctx}$, but it doesn't necessarily do so, because the shared model is linear with respect to the processed features, while $L_r$ describes variation with respect to distance in the representation space. Also, although we use cross-fitting so that the label for the row is not used in its context, we don't actually bound $\epsilon_{\rm ctx}$. Finally, the bound is only for the linear scores; next we describe how to clip and normalize them.

\paragraph{Prediction and feature contributions.}

Given a test query, we first obtain class scores for the query, given by $\widehat{\mathbf{z}}_{\rm cls}^* = (\mathbf{W}^*)^\top \mathbf{z}^* + \mathbf{b}^* = \widehat{\boldsymbol{\eta}}_{\rm lin}^*$, and then we clip the scores to $[0, 1]$ and normalize them to sum to one to obtain predicted probabilities, $\widehat{\mathbf{p}}^* = \operatorname{clip}_{[0, 1]}(\widehat{\mathbf{z}}_{\rm cls}^*) / (\mathbf{1}^\top \operatorname{clip}_{[0, 1]}(\widehat{\mathbf{z}}_{\rm cls}^*))$; if the normalization constant is zero, we set $\widehat{\mathbf{p}}^* = \mathbf{1}/C$.

We explain the model using the same set of coefficients: for each processed feature $j$ and class $c$, the feature's contribution to the class score is defined as $a_{j,c}(\mathbf{x}^*) = (z_j^* - \overline{z}_j) W^*_{j,c}$, where $\overline{\mathbf{z}} \triangleq N^{-1} \sum_{i=1}^N \mathbf{z}_i$ is the average processed feature vector computed on the training set. This leads to the decomposition: $\widehat{z}_{{\rm cls}, c}^* = \underbrace{b_c^* + (\mathbf{W}_c^*)^\top \overline{\mathbf{z}}}_{\text{baseline}} + \sum_{j=1}^D W^*_{j, c} (z_j^* - \overline{z}_j)$, where the baseline is query-dependent.

The feature contributions therefore describe how much each processed feature contributes to the fitted linear class score for the query, relative to a query-dependent baseline; they do not decompose the normalized probability, because clipping and normalization are nonlinear, nor do they describe how the feature affected the query representation, which neighbors were retrieved, or the coefficients that were fit. (In particular, even if a feature is displayed to have zero coefficient, it does not mean that the entire query-adaptive prediction pipeline is independent of the feature value.) Unless the class is fixed by the evaluation protocol, we report feature contributions for the predicted class. We also map the contributions back to the raw, semantic feature columns.

Optimization. We solve the two ElasticNet problems with weighted centering to allow for an unregularized intercept, optimizing each coefficient matrix jointly across classes with FISTA~\citep{beck2009fista} with backtracking line search and checking the KKT residual, for at most 300 iterations with tolerance $10^{-4}$. When solving for the coefficients of the shared model, all rows of the reference set are weighted equally, and when solving for the query-specific correction, we weight the rows according to equation~\ref{eq:residual_neighbor_weights}.

Implementation details. For the primary configuration (global residual), we use one \tabiclvtwo estimator for test-query predictions, using float32 precision (not automatic mixed precision). We use the class probabilities directly as the target when fitting the shared model (there is no label smoothing). When computing the representation, we use a single view at the input to the first ICL decoder block; when retrieving neighbors, we use a cached, cross-fitted reference bank (see appendix~\ref{app:reference_bank}) and exact top-$k$ retrieval.

Our primary configuration uses $k = 400$ and $\tau = 0.1$; $\alpha = 0.01$ and $\rho = 0.25$ at both stages (implying $\ell_1$ coefficient $\alpha \rho = 0.0025$, ridge weight $\alpha(1-\rho) = 0.0075$, and the coefficient on $\|\mathbf{A}\|_F^2$ is $0.00375$); when there are fewer than $k = 400$ training rows, we use all of them but still weight them according to their representations. We use squared loss.

\phantomsection
\label{par:regional_anchor_variant}
\paragraph{Regional anchor variant.}
In \oursreg, we fit local linear models in advance rather than fitting a new model for each query. We cluster the reference-bank representations and select the training sample closest to each cluster center as an anchor. For each anchor, we fit an ElasticNet using nearby reference samples and their backbone probability targets. Unlike \oursglobal, these models are fitted directly to the backbone predictions rather than to residuals of a shared model. For a new query, we use its backbone representation to weight the coefficients and intercepts of the pre-fitted models. Their weighted combination gives a query-specific linear rule for prediction and feature contributions, without fitting a new model at prediction time (Appendix~\ref{app:regional_anchors}).

\section{Experiments}
\label{sec:experiments}

We evaluate three qualities of \ours: whether the deployed global-plus-local predictor remains a competitive tabular classifier; whether its combined coefficients rank data-generating features when the generating support is known; and the deletion monotonicity, selected-feature utility, and stability of its centered contributions on real-world datasets. Predictive performance is evaluated on 386 benchmark--dataset pairs across four classification benchmarks: CC-18~\citeauthorp{bischl2021openml} (69 datasets), TabArena-Lite~\citeauthorp{erickson2025tabarena} (38), TALENT~\citeauthorp{ye2024closer} (181), and BeyondArena~\citeauthorp{purucker2026beyond} (98). We evaluate feature-selection quality on controlled synthetic tasks with designated generating supports. Real-data interpretability is evaluated on a fixed CC-18 subset, with regional explanation diagnostics confined to Appendix~\ref{app:regional_explanation_results}.

\subsection{Experimental Setup}
\label{sec:experimental_setup}

We report mean accuracy on benchmark-specific splits. CC-18 and TabArena-Lite use stratified 80/20 splits generated independently for seeds 0, 1, and 2. TALENT keeps its provided train/validation/test partition, with training and validation sets combined, and uses seed 42 for capped subsampling. BeyondArena keeps its provided splits at split/repeat positions 0, 1, and 2. In all suites, we cap the training and test sets at 1600 and 400 rows. \ours uses one backbone estimator, a five-fold cross-fitted reference bank, and the primary hyperparameters of Section~\ref{sec:method}; Table~\ref{tab:runtime} uses separate runtime operating points (Appendix~\ref{app:prediction_runtime}).

We compare against \tabiclvtwo, KernelICL~\citeauthorp{miftachov2026interpretable},
TabPFNv2~\citeauthorp{hollmann2025tabpfnv2}, modern neural baselines
~\citeauthorp{gorishniy2024tabr,holzmueller2024realmlp,ye2025modernnca,gorishniy2021revisiting},
interpretable additive models~\citeauthorp{nori2019interpretml,agarwal2021nam}, and classical tree/linear baselines. Appendix~\ref{app:predictive_baseline_configs} gives versions, fixed settings, validation use, and suite-specific ensemble sizes.

\subsection{Predictive Performance}

Table~\ref{tab:main} compares the completed primary configuration with foundation-model, interpretable, neural, and classical baselines. \oursglobal is within $0.3$--$1.6$ percentage points of \tabiclvtwo across the four suites and exceeds the strongest tree baseline by $0.7$--$1.8$ points. \oursreg trades a small amount of accuracy for faster prediction, as quantified in Table~\ref{tab:runtime}. Its explanations are less informative, however: its anchor models use a ten-times larger $\ell_1$ penalty and no shared component, and the blended predicted-class coefficients are all zero for 73.3\% of synthetic and 43.8\% of real-data queries, whose predictions then vary only through anchor routing and blended intercepts (Appendix~\ref{app:regional_explanation_results}). We therefore treat it as a computational variant rather than a coequal explanation method.

\begin{table}[t]
  \centering
  \caption{Predictive performance on CC-18, TabArena-Lite, TALENT, and BeyondArena under Section~\ref{sec:experimental_setup}. In that order, \tabiclvtwo{} uses 8/4/4/4 estimators and TabPFNv2 uses 8/8/4/8; both \ours variants use one. The \ours rows use a five-fold cross-fitted bank and the configurations in Section~\ref{sec:method} and Appendix~\ref{app:regional_anchors}. \oursreg is grouped by its predictor form, but its predicted-class explanation is all zero for 73.3\% of synthetic and 43.8\% of real-data queries (Appendix~\ref{app:regional_explanation_results}). The $\pm$ values describe variation across dataset means.}
  \label{tab:main}
  \scriptsize
  \setlength{\tabcolsep}{2pt}
  \resizebox{\columnwidth}{!}{%
  \begin{tabular}{llcccc}
    \toprule
    \textbf{Family}
      & \textbf{Method}
      & \textbf{CC-18}$\uparrow$
      & \textbf{TabArena-Lite}$\uparrow$
      & \textbf{TALENT}$\uparrow$
      & \textbf{BeyondArena}$\uparrow$ \\
    \midrule
    \multirow{2}{*}{\shortstack[l]{Tabular foundation\\models}}
      & \tabiclvtwo
      & $\mathbf{0.872 \pm 0.154}$
      & $\mathbf{0.880 \pm 0.087}$
      & $\mathbf{0.830 \pm 0.148}$
      & $\mathbf{0.839 \pm 0.132}$ \\
      & TabPFNv2
      & $\underline{0.865 \pm 0.157}$
      & $\mathbf{0.880 \pm 0.086}$
      & $\underline{0.825 \pm 0.149}$
      & $\underline{0.835 \pm 0.137}$ \\
    \midrule
    \multirow{5}{*}{\shortstack[l]{Interpretable\\by design}}
      & \cellcolor{ourshl}\oursglobal
      & \cellcolor{ourshl}$0.856 \pm 0.161$
      & \cellcolor{ourshl}$\underline{0.877 \pm 0.089}$
      & \cellcolor{ourshl}$0.817 \pm 0.160$
      & \cellcolor{ourshl}$0.833 \pm 0.134$ \\
      & \cellcolor{ourshl}\oursreg
      & \cellcolor{ourshl}$0.855 \pm 0.162$
      & \cellcolor{ourshl}$0.874 \pm 0.087$
      & \cellcolor{ourshl}$0.814 \pm 0.164$
      & \cellcolor{ourshl}$0.834 \pm 0.136$ \\
      & MAPLE
      & $0.830 \pm 0.164$
      & $0.862 \pm 0.091$
      & $0.790 \pm 0.166$
      & $0.820 \pm 0.137$ \\
      & EBM
      & $0.840 \pm 0.159$
      & $0.865 \pm 0.093$
      & $0.779 \pm 0.186$
      & $0.825 \pm 0.136$ \\
      & NAM
      & $0.827 \pm 0.165$
      & $0.859 \pm 0.094$
      & $0.774 \pm 0.182$
      & $0.808 \pm 0.144$ \\
    \midrule
    \multirow{2}{*}{\shortstack[l]{Kernel ICL\\surrogates}}
      & KernelICL (kNN, $k{=}20$)
      & $0.825 \pm 0.206$
      & $0.872 \pm 0.094$
      & $0.803 \pm 0.177$
      & $0.823 \pm 0.150$ \\
      & KernelICL (Gaussian, scale $4$)
      & $0.826 \pm 0.205$
      & $0.875 \pm 0.091$
      & $0.806 \pm 0.173$
      & $0.825 \pm 0.152$ \\
    \midrule
    \multirow{4}{*}{\shortstack[l]{Deep tabular\\models}}
      & TabR
      & $0.840 \pm 0.165$
      & $0.855 \pm 0.095$
      & $0.797 \pm 0.160$
      & $0.798 \pm 0.154$ \\
      & RealMLP
      & $0.846 \pm 0.164$
      & $0.862 \pm 0.094$
      & $0.790 \pm 0.174$
      & $0.816 \pm 0.144$ \\
      & ModernNCA
      & $0.843 \pm 0.165$
      & $0.862 \pm 0.094$
      & $0.798 \pm 0.164$
      & $0.819 \pm 0.139$ \\
      & FT-Transformer
      & $0.836 \pm 0.167$
      & $0.860 \pm 0.092$
      & $0.783 \pm 0.175$
      & $0.813 \pm 0.145$ \\
    \midrule
    \multirow{5}{*}{\shortstack[l]{Classical\\baselines}}
      & CatBoost
      & $0.849 \pm 0.161$
      & $0.866 \pm 0.092$
      & $0.799 \pm 0.166$
      & $0.824 \pm 0.141$ \\
      & Random Forest
      & $0.842 \pm 0.164$
      & $0.867 \pm 0.091$
      & $0.797 \pm 0.162$
      & $0.824 \pm 0.139$ \\
      & LightGBM
      & $0.842 \pm 0.165$
      & $0.862 \pm 0.096$
      & $0.795 \pm 0.165$
      & $0.817 \pm 0.147$ \\
      & XGBoost
      & $0.842 \pm 0.162$
      & $0.863 \pm 0.093$
      & $0.794 \pm 0.165$
      & $0.816 \pm 0.148$ \\
      & Logistic Regression
      & $0.764 \pm 0.199$
      & $0.806 \pm 0.122$
      & $0.738 \pm 0.205$
      & $0.780 \pm 0.157$ \\
    \bottomrule
  \end{tabular}%
  }
\end{table}

\begin{table}[t]
  \centering
  \caption{CC-18 accuracy and mean fit/prediction time over 69 datasets and three seeds. The default \tabiclvtwo{} row uses eight estimators; the matched row uses one. Configuration and timing details are in Appendix~\ref{app:prediction_runtime}.}
  \label{tab:runtime}
  \small
  \setlength{\tabcolsep}{10pt}
  \begin{tabular}{lccc}
    \toprule
    \textbf{Method} & \textbf{Mean Acc.}$\uparrow$ & \textbf{Fit (s)}$\downarrow$ & \textbf{Predict (s)}$\downarrow$ \\
    \midrule
    \tabiclvtwo{} (8 estimators) & $\mathbf{0.8735}$ & $\underline{0.950}$ & $5.462$ \\
    \tabiclvtwo{} (1 estimator) & $\underline{0.8710}$ & $\mathbf{0.485}$ & $\mathbf{0.745}$ \\
    TabPFNv2 & $0.8641$ & $3.139$ & $9.660$ \\
    \rowcolor{ourshl}
    \oursglobal & $0.8557$ & $5.515$ & $6.849$ \\
    \rowcolor{ourshl}
    \ourstight{}-Fast (global + residual) & $0.8609$ & $5.842$ & $1.495$ \\
    \rowcolor{ourshl}
    \oursreg & $0.8555$ & $6.736$ & $\underline{0.850}$ \\
    \bottomrule
  \end{tabular}
\end{table}

In Table~\ref{tab:runtime}, the fast configuration reduces global-residual prediction time from $6.849$ to $1.495$~s; regional anchors require $0.850$~s. Their speed advantage is relative to the eight-estimator backbone: all TICL variants predict more slowly than the matched one-estimator backbone. These batch timings differ from the explanation workload in Table~\ref{tab:faithfulness}.

\subsection{Feature Selection}
\label{sec:synthetic_ground_truth}

In real-world datasets, we rarely know which features generate each label. Deletion metrics therefore cannot establish whether an explanation identifies these features. Feature selection and support recovery have long been used for interpretable prediction~\citeauthorp{guyon2003introduction,tibshirani1996lasso,zou2005elastic,shaham2022deep}. Recent work also considers self-supervised and group-structured unsupervised feature selection~\citeauthorp{segal2024spectral,lifshitz2026groupfs}, as well as sparse linear support recovery using randomized, refined, or projected-gate estimators~\citeauthorp{lindenbaum2021rawls,lindenbaum2022refined,jana2023projected}. We use four controlled binary tasks adapted from LSPIN~\citeauthorp{yang2022locally}, where the generating features are known for each query and vary across examples. E1--E3 use nonlinear gated rules with query-dependent support. E4 has two latent generating groups and 46 nuisance features. We test whether explanations rank the generating features highly for each query. In E4, we evaluate recovery of the support for the query's latent group. However, all four informative variables are shifted by group, so variables outside this support can still help predict the label when the group is unobserved.

We compare \oursglobal, using one backbone estimator and the primary reference bank, with EBM, logistic regression, and MAPLE~\citeauthorp{plumb2018maple}. For each query, we rank features by the absolute values of the nonzero global-plus-residual coefficients for \ours, local coefficients for MAPLE, additive scores for EBM, and mean-relative standardized contributions for logistic regression. Each method selects up to the known number of generating features, giving all methods an oracle ranking budget. We report support F1 and the accuracy of each interpretable method's own predictions. We evaluate post-hoc explainers of \tabiclvtwo{} against the same supports and with the same budget, but measure accuracy after replacing unselected features with their means. Appendix~\ref{app:lspin_tasks} provides the full setup, and Appendix~\ref{app:regional_explanation_results} reports regional diagnostics.

Across the four tasks, seeds 21--30, and 100 queries per task and seed, \oursglobal achieves mean support F1 of $0.539\pm0.010$ and accuracy of $0.706\pm0.011$. Standard errors are computed across seeds after averaging over tasks within each seed. Among the interpretable-by-design predictors, \oursglobal has the highest average support F1 and performs best on E1--E3. MAPLE has higher support F1 on E4 ($0.624$ versus $0.548$). The post-hoc explainers of \tabiclvtwo{} are shown for reference rather than as direct competitors: their scores do not determine any prediction, whereas each interpretable-by-design method is evaluated on the rule it uses to predict.

\begin{table}[t]
  \centering
  \caption{Generating-support F1 and accuracy on LSPIN E1--E4 (10 seeds; 100 queries per task--seed). Blocks distinguish own-predictor from masked-\tabiclvtwo{} accuracy; E4 uses group-conditional support. An all-zero coefficient vector selects no features and receives support F1 zero.}
  \label{tab:lspin_selection}
  \scriptsize
  \resizebox{\linewidth}{!}{%
  \begin{tabular}{lcccccccc}
    \toprule
    & \multicolumn{2}{c}{\textbf{E1}} & \multicolumn{2}{c}{\textbf{E2}} & \multicolumn{2}{c}{\textbf{E3}} & \multicolumn{2}{c}{\textbf{E4}} \\
    \cmidrule(lr){2-3}\cmidrule(lr){4-5}\cmidrule(lr){6-7}\cmidrule(lr){8-9}
    \textbf{Method} & \textbf{F1}$\uparrow$ & \textbf{Acc.}$\uparrow$ & \textbf{F1}$\uparrow$ & \textbf{Acc.}$\uparrow$ & \textbf{F1}$\uparrow$ & \textbf{Acc.}$\uparrow$ & \textbf{F1}$\uparrow$ & \textbf{Acc.}$\uparrow$ \\
    \midrule
    \multicolumn{9}{l}{\textit{Interpretable-by-design predictors (own deployed accuracy)}} \\[2pt]
    \rowcolor{ourshl}
    \oursglobal & $0.512$ & $\underline{0.663}$ & $0.540$ & $\underline{0.675}$ & $0.554$ & $\mathbf{0.787}$ & $0.548$ & $0.697$ \\
    MAPLE & $0.408$ & $\mathbf{0.665}$ & $0.480$ & $\mathbf{0.684}$ & $0.455$ & $0.760$ & $0.624$ & $\mathbf{0.712}$ \\
    EBM & $0.439$ & $0.636$ & $0.361$ & $0.653$ & $0.494$ & $0.690$ & $0.541$ & $0.688$ \\
    Logistic (ERM) & $0.422$ & $0.535$ & $0.343$ & $0.649$ & $0.491$ & $0.656$ & $0.493$ & $0.632$ \\
    \midrule
    \multicolumn{9}{l}{\textit{Post-hoc explainers of \tabiclvtwo{} (masked-\tabiclvtwo{} accuracy)}} \\[2pt]
    LIME & $0.517$ & $0.645$ & $\mathbf{0.640}$ & $0.674$ & $\mathbf{0.713}$ & $0.743$ & $\mathbf{0.964}$ & $\underline{0.703}$ \\
    KernelSHAP & $0.525$ & $0.638$ & $0.616$ & $0.656$ & $\underline{0.689}$ & $\underline{0.768}$ & $0.556$ & $0.690$ \\
    Integrated Gradients & $\underline{0.533}$ & $0.639$ & $\underline{0.624}$ & $\mathbf{0.684}$ & $0.687$ & $0.764$ & $\underline{0.765}$ & $0.702$ \\
    GradientSHAP & $\mathbf{0.540}$ & $0.634$ & $0.617$ & $0.664$ & $0.662$ & $0.766$ & $0.581$ & $0.679$ \\
    \bottomrule
  \end{tabular}}
\end{table}

\subsection{Interpretability Evaluation}
\label{sec:faithfulness}

For each query, we sum the centered contributions of processed features that belong to the same original table column, and rank the columns by their signed contribution to the predicted-class score. This allows us to check whether removing the features that most support the prediction consistently decreases its probability. We use the same coefficients as the deployed predictor to obtain this ranking; Appendix~\ref{app:faithfulness_details} describes the exact mapping to the original columns.

Table~\ref{tab:faithfulness} uses 50 fixed queries from each of 10 CC-18 datasets (see Appendices~\ref{app:explanation_benchmark} and~\ref{app:explanation_protocol}). For both \oursglobal and the post-hoc methods, we evaluate explanations of the same complete \oursglobal predictor. We keep the predicted class fixed while progressively replacing features with their training-set means. Deletion Spearman measures the correlation between the number of features replaced and the cumulative decrease in the probability of that class, so it measures how consistently the probability decreases, rather than how much it decreases. Gen-SVM measures how well the selected features allow an SVM to reproduce the corresponding wrapper's predicted class, while Gen-kmeans measures how well clusters obtained from the selected features agree with the true labels. Neighbor stability measures how much the normalized attribution or selection scores change between nearby observed queries; lower values indicate greater stability. We report the regional results and the different sets of queries for which their metrics are valid in Appendix~\ref{app:regional_explanation_results}. Appendix~\ref{app:metric_definitions} gives the formal definitions.

\begin{table*}[t]
  \centering
  \caption{Explanation quality and online prediction-plus-explanation cost on 10 CC-18 datasets. We evaluate quality using 500 queries; all rows use the same 422 queries with valid values for deletion and 493 for the other metrics. We exclude undefined values rather than setting them to zero. We average runtime across datasets, giving each dataset equal weight and excluding one-off setup. For LIME and KernelSHAP attribution time, we use a fixed subset of 100 queries (10 per dataset, three repeats); for the common predictor latency and the remaining rows, we use 50 queries per dataset.}
  \label{tab:faithfulness}
  \scriptsize
  \setlength{\tabcolsep}{3.5pt}
  \resizebox{\linewidth}{!}{%
  \begin{tabular}{lccccc}
    \toprule
    \textbf{Method} & \textbf{Deletion Spearman}$\uparrow$ & \textbf{Gen-SVM}$\uparrow$ & \textbf{Gen-kmeans}$\uparrow$ & \textbf{Neighbour stab.}$\downarrow$ & \textbf{Online ms/query}$\downarrow$ \\
    \midrule
    \rowcolor{ourshl}
    \oursglobal & $\underline{0.470}$ & $\mathbf{0.886}$ & $\mathbf{0.697}$ & $0.032$ & $\mathbf{26.974}$ \\
    KernelSHAP & $0.384$ & $0.708$ & $0.555$ & $0.037$ & $53{,}686.935$ \\
    LIME & $0.449$ & $0.813$ & $0.601$ & $0.032$ & $14{,}607.496$ \\
    FastSHAP & $0.408$ & $0.608$ & $0.637$ & $0.025$ & $27.197$ \\
    IG on surrogate & $0.417$ & $\underline{0.861}$ & $\underline{0.669}$ & $0.031$ & $27.482$ \\
    GradientSHAP on surrogate & $0.446$ & $0.856$ & $0.666$ & $0.031$ & $27.560$ \\
    Input $\times$ Gradient on surrogate & $0.454$ & $0.830$ & $0.568$ & $0.034$ & $\underline{27.112}$ \\
    INVASE & $0.458$ & $0.514$ & $0.495$ & $\underline{0.007}$ & $27.191$ \\
    L2X & $0.441$ & $0.503$ & $0.536$ & $0.010$ & $27.213$ \\
    REAL-X & $\mathbf{0.534}$ & $0.500$ & $0.476$ & $\mathbf{0.004}$ & $27.192$ \\
    \bottomrule
  \end{tabular}%
  }
\end{table*}

\oursglobal has the highest Gen-SVM agreement and Gen-kmeans score, and the second-highest deletion Spearman correlation. Its neighbor stability is similar to LIME and the gradient-based methods, although the learned selectors are more stable. The learned post-hoc explainers add little online latency after the common predictor call, but they require 41.1--105.0 s per dataset for one-off construction. For 50 queries, excluding the predictor setup shared by all methods, \oursglobal takes 1.35 s, while the fastest post-hoc alternative takes 42.44 s (see Appendix~\ref{app:explanation_runtime_accounting}).

\paragraph{Component and configuration ablations.}
On the ten explanation-benchmark datasets, adding a local correction increases the shared model's accuracy from 76.13\% to 89.90\%. However, the global-only model has higher deletion Spearman (0.610 versus 0.530) and Gen-SVM agreement (95.4\% versus 88.8\%). This means that a model can score higher on these metrics even when its accuracy is lower, so we need to consider them together with accuracy.

When compared with the local-only model using the same $k=400$, the combined model has slightly lower accuracy (89.90\% versus 90.18\%), and we don't find a clear difference in deletion monotonicity in the paired comparison. However, the combined model gives a nonzero explanation for every query, while the local-only model does so for only 68\% of queries. On this subset, the main benefit we observe from adding the shared model is that the explanation is nonzero for every query.

Using $k=400$ rather than $k=30$ improves paired deletion Spearman by $0.130$ (95\% interval $[0.027,0.234]$), while $k=30$ gives 1.29 percentage points higher accuracy on this subset. On CC-18, the fast configuration ($k=30$, 30 local iterations) also gives 0.52 percentage points higher accuracy and is $4.6\times$ faster at prediction (Table~\ref{tab:runtime}). Reducing the local iteration cap from 300 to 30 shows no clear difference. Appendix~\ref{app:component_ablation} reports all nine configurations and their paired comparisons.

\subsection{Regression}

To test whether the method can also be used for continuous targets, we use a scalar prediction from the backbone as the teacher target. We build the reference bank using five-fold cross-fitting with shuffled K-fold splits instead of stratified folds. We standardize all teacher targets using the target mean and scale learned by the full-context backbone, so that the fixed regularization strength is comparable across datasets. \oursglobal fits its shared and local squared-loss ElasticNets to these standardized targets. For \oursreg, we use the same 64-anchor configuration as in classification, fit each anchor model to the standardized out-of-fold teacher targets, and combine the eight closest anchor models. Both methods transform their predictions back to the original target scale, without clipping or probability normalization. Table~\ref{tab:regression} reports predictive performance on the 119 TALENT regression datasets; we don't evaluate explanation quality here.

\begin{table}[t]
  \centering
  \caption{TALENT regression on 119 splits. nRMSE uses the test-target standard deviation; $\pm$ reports across-dataset variation.}
  \label{tab:regression}
  \scriptsize
  \setlength{\tabcolsep}{4pt}
  \begin{tabular}{llcc}
    \toprule
    \textbf{Family}
      & \textbf{Method}
      & \textbf{Mean nRMSE}$\downarrow$
      & \textbf{Mean $R^2$}$\uparrow$ \\
    \midrule
    \multirow{3}{*}{\shortstack[l]{Tabular foundation\\models}}
      & \tabiclvtwo
      & $\mathbf{0.479 \pm 0.300}$
      & $\mathbf{0.681}$ \\
      & \tabiclvtwo{} (1 estimator)
      & $\underline{0.482 \pm 0.303}$
      & $\underline{0.676}$ \\
      & TabPFNv2
      & $0.506 \pm 0.310$
      & $0.648$ \\
    \midrule
    \multirow{4}{*}{\shortstack[l]{Interpretable\\by design}}
      & \cellcolor{ourshl}\oursglobal
      & \cellcolor{ourshl}$0.515 \pm 0.287$
      & \cellcolor{ourshl}$0.653$ \\
      & \cellcolor{ourshl}\oursreg
      & \cellcolor{ourshl}$0.521 \pm 0.275$
      & \cellcolor{ourshl}$0.654$ \\
      & MAPLE
      & $0.571 \pm 0.298$
      & $0.586$ \\
      & EBM
      & $0.601 \pm 0.273$
      & $0.564$ \\
    \midrule
    \multirow{5}{*}{\shortstack[l]{Classical\\baselines}}
      & CatBoost
      & $0.537 \pm 0.297$
      & $0.623$ \\
      & Random Forest
      & $0.550 \pm 0.301$
      & $0.608$ \\
      & LightGBM
      & $0.566 \pm 0.333$
      & $0.569$ \\
      & XGBoost
      & $0.587 \pm 0.394$
      & $0.501$ \\
      & Ridge Regression
      & $0.702 \pm 0.272$
      & $0.434$ \\
    \bottomrule
  \end{tabular}
\end{table}

\oursglobal has an nRMSE of 0.515, trailing the \tabiclvtwo{} backbone model (0.482 nRMSE for the one-estimator case), but improving over the strongest tree baseline (0.537 nRMSE) and the other evaluated interpretable predictors. \oursreg{} is also competitive (0.521 nRMSE), with a slightly higher mean $R^2$ score (0.654 vs. 0.653 for \oursglobal{}). We leave further tuning of these approaches for regression tasks to future work.

\subsection{Limitations and Future Work}

\oursglobal{} is limited by the expressiveness of the linear model, and may struggle to capture highly interactive local residual functions in the retrieved neighborhood, particularly for higher-dimensional and multi-class problems. Our approach does not directly model categorical features, and we ordinally encode any categorical columns before treating them as numerical. As a result, our feature contributions are for the entire encoded column, rather than for specific categories. Handling categorical features directly and providing attributions for groups of categorical features could improve the interpretability of \oursglobal{} in this setting. Similarly, we do not currently evaluate our approach on text input. We leave evaluating \oursglobal{} with direct categorical handling and on text inputs, and augmenting the approach with richer local interpretable model components, to future work.

While our provided feature contributions exactly decompose the query-specific linear score, our explanations do not decompose the final normalized probability score or account for how features influence the learned representation, retrieval, and the fitted coefficients. Sparsity in the learned coefficients does not imply that the entire query-adaptive model only depends on the features with nonzero coefficients. Our retrieval approach (top-$k$) can also introduce discontinuities into our explanations, which can shift substantially based on small changes in the query point; the active set for ElasticNet may also vary. Further, our regional anchor model approach often does not provide a feature-level explanation, as the coefficients for the blended model (corresponding to the predicted class) are all zeros for 73.3\% of queries on the synthetic data and 43.8\% on real data.

Our real data interpretability study is limited to a single backbone model and a fixed subset of 10 datasets in a capped low-data regime. The studied metrics do not establish causal feature relevance; notably, Gen-SVM can utilize both the locations and values of retained features. Additionally, the gradient-based rows utilize a differentiable surrogate with unquantified approximation error. We also do not evaluate the interpretability of our regression approach. Our methods also require access to the internal representations of the tabular in-context learning backbone, and do not apply directly to fully black-box models.

\section{Conclusion}

We propose \oursglobal{}, an interpretable-by-design predictor leveraging a tabular foundation model to guide a linear predictor. \oursglobal{} learns a single ElasticNet model to capture global feature effects, and leverages the tabular foundation model to retrieve a similar set of samples for each query, learning a second ElasticNet to capture the residuals of the first. Together, these produce a query-specific linear prediction rule. \oursglobal{} provides feature contributions and a baseline that exactly sum to the linear prediction score for each class. We theoretically bound \oursglobal{}'s local approximation error, under a smoothness assumption on the residual in the learned representation space that includes an additive term for differences between labeled contexts.

Across four classification benchmarks, \oursglobal{} is competitive with the \tabiclvtwo{} backbone (within 0.3 to 1.6 percentage points in accuracy) and outperforms the strongest tree-based baseline on each. Across a synthetic benchmark for feature selection, \oursglobal{} is the top-performing interpretable predictor among those evaluated on average in support F1 score. On the real-data explanation benchmark, \oursglobal{} scores highest among the evaluated methods on two metrics measuring the utility of the selected features, and second on a metric measuring deletion monotonicity. We also propose \oursreg{}, which combines pre-fitted regional linear models; \oursreg{} is slower than a comparable single-estimator backbone model, but faster than the corresponding eight-estimator backbone model.

Our approach is limited by the fact that our explanations are only for the linear class scores, and by the expressiveness of local linear models. We leave exploring richer local interpretable models, leveraging additional tabular foundation models, and evaluating our approach's explanation quality for regression to future work.

\subsection*{Reproducibility statement}

We specify the details of our predictor and its primary hyperparameters in Section~\ref{sec:method} and the specifics of our benchmark selection, data splits, seeds, and row capping in Section~\ref{sec:experimental_setup}. In Appendix~\ref{app:preprocessing_mapping}, we detail the data preprocessing and raw feature mapping. In Appendix~\ref{app:predictive_baseline_configs}, we detail the configuration of our predictive baselines. In Appendix~\ref{app:reference_bank}, we detail the construction of our cross-fitted reference bank. In Appendix~\ref{app:elasticnet_solve}, we detail our ElasticNet solver. In Appendix~\ref{app:prediction_runtime}, we detail our runtime protocol and the configuration of our regional anchors. In Appendix~\ref{app:explanation_benchmark}, we detail our explanation benchmark; in Appendix~\ref{app:explanation_protocol}, we detail our explanation baselines; and in Appendix~\ref{app:metric_definitions}, we provide metric definitions. In Appendix~\ref{app:lspin_tasks}, we detail our synthetic task generation. In Appendix~\ref{app:residual_approx_proof}, we provide a proof of Theorem~\ref{thm:residual_approx}.

\subsection*{AI use statement for this paper}

We used generative AI to write, debug, and run code for this paper, including code for the implementation of our method, synthetic task generators, preprocessing for our explanation benchmark tasks, and code to tabulate results. We note that all experiments in this paper were designed by the authors and not by generative AI. The generative AI tools were not used to generate proof ideas, hypotheses, or conceptual frameworks, to design experiments, or to translate text. The generative AI tools were used to assist with literature searches, structure the paper, generate a draft, and edit the paper (i.e., change wording or the placement of author-written text, such as in proofs). They also helped check whether descriptions of theoretical claims and experimental results were consistent with the proofs and reported numbers. All code written by the generative AI was reviewed and tested by the authors to ensure it performed as expected. All numbers reported in this paper were checked to make sure they match the corresponding recorded output from the experiments. All proofs were checked by hand by the authors. All edits to the text were approved by the authors. The authors take full responsibility for this work, including all text, claims, and artifacts generated with the help of AI.
\clearpage
\bibliographystyle{unsrtnat}
\bibliography{bibfile_arxiv}

\appendix
\input{app_arxiv}
\end{document}

%% file: app_arxiv.tex
\clearpage

\section{Broader Impacts}
\label{app:broader_impacts}

\paragraph{Positive impacts.}
Interpretable machine learning is especially valuable in high-stakes decision domains such as healthcare, credit scoring, and legal risk assessment, where practitioners need to understand and audit individual predictions. \ours lowers the barrier to inspecting tabular predictions by producing additive per-feature contributions directly, without a separate post-hoc explainer pipeline. Such decompositions may make it easier for domain experts and regulators to audit model behaviour, provided that their empirical fidelity and stability are validated for the deployed configuration.

\paragraph{Negative impacts.}
Like other prediction and explanation tools, the method may be misused in high-stakes settings or lend unwarranted confidence to automated decisions. Practitioners may also over-interpret coefficients as causal evidence rather than predictive associations. We frame them explicitly as contributions to the deployed query-specific linear score; they do not separately decompose representation-based retrieval, coefficient fitting, or the clipping-and-normalization step, and they do not imply causality.

\section{Reference-Bank Construction}
\label{app:reference_bank}

The reference bank is constructed once for each training split using five-fold
stratified cross-fitting. We partition the training rows into five folds. For
each fold, we fit the backbone on the other four folds, present the held-out
rows as unlabeled queries, and record their class probabilities and
representations at the input to the first ICL decoder block. Concatenating the
held-out outputs gives every training row one out-of-fold probability target and one
query-role representation. This prevents a row's target and representation
from being produced by a backbone that was fitted on that row's label, and it
places reference rows in the same unlabeled-query role used at test time. A
separate backbone fitted to the complete training split supplies test-query
representations. The folds are used only to construct the reference bank; they
do not create an ensemble of deployed predictors.
The processed design vectors used by both ElasticNet stages are constructed
with the complete-training preprocessing map. The retained-column mask,
processed-coordinate order, and raw-column origin map are then held fixed;
fold-specific backbones provide out-of-fold targets and representations, not
alternative linear-design coordinates. Thus a stored row and a test query
enter the linear rule in the same processed feature space even though their
representations are obtained under different labeled contexts.
Cross-fitting prevents direct label leakage and places each reference row in an
unlabeled-query role, but it does not guarantee that representations produced
under different fold contexts are directly comparable to one another or to a
test representation produced under the complete context. Theorem~\ref{thm:residual_approx}
therefore states the required compatibility condition on the actual
context-specific outputs and includes an additive term for the resulting
cross-context shift.

\section{Weighted ElasticNet Optimization}
\label{app:elasticnet_solve}

Both stages of \ours use an unpenalized intercept and the same ElasticNet
penalty. Consider a generic problem with $M$ observations, processed feature
vectors $\mathbf z_i\in\mathbb R^D$, target vectors
$\mathbf t_i\in\mathbb R^C$, and normalized weights $w_i\geq0$ satisfying
$\sum_{i=1}^M w_i=1$. Define the weighted means
$\overline{\mathbf z}_w=\sum_{i=1}^M w_i\mathbf z_i$ and
$\overline{\mathbf t}_w=\sum_{i=1}^M w_i\mathbf t_i$. Let
$\mathbf Z_w\in\mathbb R^{M\times D}$ and
$\mathbf T_w\in\mathbb R^{M\times C}$ have rows
$\sqrt{w_i}(\mathbf z_i-\overline{\mathbf z}_w)^\top$ and
$\sqrt{w_i}(\mathbf t_i-\overline{\mathbf t}_w)^\top$, respectively. The
centered objective is
\[
  \min_{\mathbf A\in\mathbb R^{D\times C}}\;
  \frac{1}{2}\|\mathbf Z_w\mathbf A-\mathbf T_w\|_F^2
  +\alpha\rho\|\mathbf A\|_1
  +\frac{\alpha(1-\rho)}{2}\|\mathbf A\|_F^2,
  \qquad
  \mathbf a=\overline{\mathbf t}_w-\mathbf A^\top\overline{\mathbf z}_w.
\]
The recovered unpenalized intercept is $\mathbf a\in\mathbb R^C$.
The implementation jointly optimizes all class columns using FISTA with
backtracking and a Karush--Kuhn--Tucker residual check. It uses at most 300
iterations and tolerance $10^{-4}$. The shared model uses uniform weights and
the probability targets in the primary reference bank; the query-specific correction uses the normalized
similarity weights from Eq.~\eqref{eq:residual_neighbor_weights} and residual
targets $\mathbf r_i$. Both use $\alpha=0.01$ and $\rho=0.25$, corresponding
to an $\ell_1$ coefficient $\alpha\rho=0.0025$ and ridge weight
$\alpha(1-\rho)=0.0075$; because the ridge term is written with a factor of
$1/2$, the multiplier on $\|\mathbf A\|_F^2$ is $0.00375$.

\section{Proof of the Residual Approximation Bound}
\label{app:residual_approx_proof}

\begin{proof}[Proof of Theorem~\ref{thm:residual_approx}]
For brevity, write $\mathbf D=\mathbf D^*$, $\mathbf d=\mathbf d^*$,
$\widetilde w_i=\widetilde w_i^*$, and
$\overline{\mathbf r}_k=\sum_{i\in\mathcal N^*}\widetilde w_i\mathbf r_i$.
Because the intercept in Eq.~\eqref{eq:local_residual_elasticnet} is
unpenalized, its first-order optimality condition gives
$\mathbf d=\overline{\mathbf r}_k-\mathbf D^\top\overline{\mathbf z}_k$.
The shared term cancels from the approximation error, so
\begin{align*}
 &\|\boldsymbol\eta^*-
       \widehat{\boldsymbol\eta}_{\rm lin}^*\|_2 \\
 &\quad=\|\mathbf r^*-
       (\mathbf D^\top\mathbf z^*+\mathbf d)\|_2 \\
 &\quad\leq
   \|\mathbf r^*-\overline{\mathbf r}_k\|_2
   +\|\overline{\mathbf r}_k-
         (\mathbf D^\top\mathbf z^*+\mathbf d)\|_2.
\end{align*}

For the first term, the assumption on the retrieved residuals implies
\begin{align*}
 \|\mathbf r^*-\overline{\mathbf r}_k\|_2
 &\leq \sum_{i\in\mathcal N^*}\widetilde w_i
  \|\mathbf r^*-\mathbf r_i\|_2 \\
 &\leq L_r\Delta_k(\mathbf x^*)+\epsilon_{\rm ctx}.
\end{align*}
For the second term, the intercept identity yields
\[
 \|\overline{\mathbf r}_k-(\mathbf D^\top\mathbf z^*+\mathbf d)\|_2
 \leq \|\mathbf z^*-\overline{\mathbf z}_k\|_2\|\mathbf D\|_F.
\]

It remains to bound $\|\mathbf D\|_F$. By optimality, the objective at
$(\mathbf D,\mathbf d)$ is no larger than at the feasible local-constant
choice $(\mathbf 0,\overline{\mathbf r}_k)$. Dropping the nonnegative data-fit
and $\ell_1$ terms from the optimum gives
\begin{align*}
 \frac{\alpha(1-\rho)}{2}\|\mathbf D\|_F^2
 &\leq \frac{1}{2}\sum_{i\in\mathcal N^*}\widetilde w_i
       \|\mathbf r_i-\overline{\mathbf r}_k\|_2^2 \\
 &\leq \frac{1}{2}\sum_{i\in\mathcal N^*}\widetilde w_i
       \|\mathbf r_i-\mathbf r^*\|_2^2 \\
 &\leq \frac{\bigl(L_r\Delta_k(\mathbf x^*)+\epsilon_{\rm ctx}\bigr)^2}{2}.
\end{align*}
The middle inequality holds because the weighted mean
$\overline{\mathbf r}_k$ minimizes weighted squared distance over constant
vectors. Hence
$\|\mathbf D\|_F\leq
\bigl(L_r\Delta_k(\mathbf x^*)+\epsilon_{\rm ctx}\bigr)/\sqrt{\alpha(1-\rho)}$.
Substitution into the two error terms above proves
Eq.~\eqref{eq:residual_approx_bound}.
\end{proof}

The theorem states its assumption on the actual context-specific residuals,
and the proof uses it only for pairs formed by a retrieved row and the query;
no condition is imposed among the reference rows. Without the additive term
$\epsilon_{\rm ctx}$, any nonzero context-induced shift between a reference
row and the query at vanishing representation distance would force $L_r$ to
be arbitrarily large. The first error term alone is the error of the
local-constant correction $(\mathbf 0,\overline{\mathbf r}_k)$, which is
therefore at most $L_r\Delta_k(\mathbf x^*)+\epsilon_{\rm ctx}$; the result
bounds how much the regularized slope can add to this error rather than
establishing a first-order improvement from fitting it. It is tighter than an
analogous target-smoothness result only when the shared model reduces local
variation. The bound applies to the linear score, not to the subsequent
clipping and normalization. No claim is made that cross-fitting bounds
$\epsilon_{\rm ctx}$ or that the residual smoothness condition holds
uniformly on every dataset.

\section{Additional Experimental Details}
\label{app:experiments}

\subsection{Input Preprocessing and Raw-Feature Mapping}
\label{app:preprocessing_mapping}

Each benchmark loader preserves a fixed raw-column order and converts
categorical columns to ordinal integer codes. For CC-18, codes are created when
the full OpenML table is downloaded (a missing category becomes NaN); for
TabArena-Lite, the encoder is fitted to the full table before the seeded split.
TALENT fits the encoder on its supplied training split and applies it to
validation and test, while BeyondArena fits it on each provided training split;
both assign unseen categories code $-1$. After the benchmark's training block
is formed, remaining NaNs are replaced by training-column means; for TALENT
this block is the concatenated supplied training and validation data. No
one-hot, target, or interaction expansion is applied. Labels are integer
encoded independently of the feature preprocessing.

For the single-estimator TICL models, the resulting numerical matrix is passed
through the TabICL~2.0.3 preprocessing pipeline fitted on the training block.
The map removes constant columns, standardizes each retained column and clips
the standardized value to $[-100,100]$, applies the Yeo--Johnson power
transform with standardization, and applies TabICL's two-stage soft outlier
clipping with threshold four. The same fitted transformations are applied to
reference rows and test queries. With one estimator, the recorded first-member
feature permutation fixes the processed-coordinate order. Consequently, every
retained processed coordinate has exactly one originating raw column; a raw
constant column has no retained coordinate and receives score zero.

Let $g(j)$ denote the raw-column origin of processed coordinate $j$. For class
$c$, the displayed raw-column contribution is
\[
 A_{r,c}(\mathbf x^*)=
 \sum_{j:g(j)=r}(z_j^*-\overline z_j)W^*_{j,c},
 \qquad \mathbf W^*=\mathbf P+\mathbf D^*.
\]
Thus grouping uses a \emph{sum of contributions}, which preserves the exact
linear-score decomposition. In the reported experiments the sum contains at
most one term because preprocessing does not expand a column. Synthetic
support recovery instead attaches each transformed-space combined coefficient
$W^*_{j,c}$ to its raw origin before taking absolute values and ranking. It
does not sum coefficients, rescale them into raw measurement units, or use the
centered contribution formula. This distinction prevents the raw-feature
display operation from changing the coefficient-ranking estimand.

\subsection{Predictive-Baseline Configurations}
\label{app:predictive_baseline_configs}

Table~\ref{tab:predictive_baseline_configs} records the fixed configurations
used for Table~\ref{tab:main}. Runs used Python~3.10 with TabICL~2.0.3,
TabPFN~2.2.1, scikit-learn~1.6.1, CatBoost~1.2.10, XGBoost~3.1.0,
LightGBM~4.6.0, InterpretML~0.7.8, AutoGluon~1.5.0, pytabkit~1.7.3,
TabArena~0.0.1, and PyTorch~2.9.1. None of the methods received a
benchmark-wide hyperparameter search; validation below means within-fit model
selection or early stopping, never selection on the reported test split.

\begin{table}[H]
  \centering
  \caption{Configurations for the predictive baselines in Table~\ref{tab:main}.
  Suite order for ensemble counts is CC-18/TabArena-Lite/TALENT/BeyondArena.
  ``Defaults'' means the named package's released default configuration except
  for the explicitly listed settings.}
  \label{tab:predictive_baseline_configs}
  \scriptsize
  \setlength{\tabcolsep}{3pt}
  \begin{tabular}{p{0.17\linewidth}p{0.48\linewidth}p{0.27\linewidth}}
    \toprule
    \textbf{Method} & \textbf{Fixed configuration} & \textbf{Selection/validation} \\
    \midrule
    \tabiclvtwo & TabICL; 8/4/4/4 estimators; packaged defaults otherwise. & No tuning or validation selection. \\
    TabPFNv2 & TabPFN; 8/8/4/8 estimators; native multiclass model for at most 10 classes and one-vs-rest above 10; packaged defaults otherwise. & No tuning or validation selection. \\
    KernelICL & Repository implementation with four TabICL estimators; $k$NN uses $k=20$ and Gaussian uses scale 4. & Fixed a priori; no calibration or search. \\
    MAPLE & scikit-learn random forest with 200 trees, \texttt{max\_features}=0.5, minimum leaf size 10; per-query ridge penalty $0.001$. & Stratified 80/20 training/validation split; retained count chosen by validation RMSE from candidates up to $\min(D,100)$, using at most 50 validation queries. \\
    EBM & InterpretML; 128 bins, 3000 maximum rounds, learning rate 0.01, and no interactions. & Fixed; package-internal fitting only, with no external search. \\
    NAM & One hidden layer of width 32 per feature; at most 300 epochs; batch size 256; AdamW with learning rate $10^{-3}$ and weight decay $10^{-4}$. & Stratified 80/20 training/validation split; validation-loss early stopping with patience 30. \\
    TabR & pytabkit \texttt{TabR\_S\_D\_Classifier} defaults. & No external search; package training procedure. \\
    RealMLP, ModernNCA, FT-Transformer & AutoGluon RealMLP and FT-Transformer, and the TabArena ModernNCA wrapper, using released defaults. FT-Transformer worker counts were set to zero. & Fixed stratified 80/20 training/validation split passed to each package; no configuration search. \\
    CatBoost & 300 iterations, depth 6, learning rate 0.1. & Fixed; no validation set or early stopping. \\
    Random Forest & scikit-learn; 300 trees, remaining defaults. & Fixed; no validation set. \\
    LightGBM/XGBoost & 300 trees, maximum depth 6, learning rate 0.1; histogram XGBoost. & Fixed; no validation set or early stopping. \\
    Logistic regression & scikit-learn; $C=1$, at most 1000 iterations; SAGA for CC-18, TabArena-Lite, and BeyondArena, and L-BFGS for TALENT. & Fixed; no validation set. \\
    \bottomrule
  \end{tabular}
\end{table}

FT-Transformer used its released 300-feature guard. On BeyondArena tasks above
that limit, a deterministic training-variance screen retained the 300
highest-variance features. On CC-18 tasks above that limit, the guard was
instead disabled and training used batch size eight; this was an execution
setting, not test-set tuning.

\subsection{Compute Resources}
\label{app:compute}

Experiments were distributed across NVIDIA GPUs on university compute servers. The fit/prediction measurements in Table~\ref{tab:runtime} were collected on NVIDIA L4 GPUs, while the explanation measurements in Table~\ref{tab:faithfulness}, synthetic evaluation in Table~\ref{tab:lspin_selection}, and regression evaluation in Table~\ref{tab:regression} were collected on NVIDIA A100 80\,GB GPUs. Timing protocols use device synchronization; parallel benchmark execution was used only to reduce wall-clock time and is not treated as a latency measurement.

\subsection{Dataset Licences}
\label{app:licenses}

All benchmark datasets are publicly available. The CC-18 benchmark datasets are sourced from OpenML~\citeauthorp{bischl2021openml}; individual dataset licenses vary by uploader, ranging from CC0 (public domain) to CC~BY~4.0. The \tabiclvtwo codebase~\citeauthorp{qu2026tabiclv2} is used under the BSD~3-Clause License. All other software dependencies (scikit-learn, NumPy, etc.) are used under their respective open-source licenses (BSD/MIT).

\subsection{Prediction Runtime}
\label{app:prediction_runtime}

Table~\ref{tab:runtime} compares three TICL runtime configurations.
\tabiclvtwo and TabPFNv2 use their default ensemble sizes of eight
and four estimators, respectively. All TICL variants use one
\tabiclvtwo estimator and a batched five-fold out-of-fold reference bank.
Both global-residual variants use $\alpha=0.01$ and $\rho=0.25$
at both stages, with 300 iterations for the shared fit.
The standard configuration uses $k=400$ and 300 local iterations;
TICL-Fast uses $k=30$ and 30 local iterations. In Table~\ref{tab:runtime}, it reduces prediction time from 6.849 to 1.495~s and increases accuracy by 0.52 percentage points relative to the standard global-residual configuration. Separate ablations show an accuracy--deletion-monotonicity trade-off for smaller neighbourhoods (Appendix~\ref{app:component_ablation}).
Regional settings are given below.
These runtime configurations use fixed-step FISTA without the
backtracking used by the primary global model.

Timings were measured on NVIDIA L4 GPUs with one worker per GPU and one CPU math-library thread. CUDA synchronization brackets each timed operation, and measurements are averaged over three repetitions. Fit time includes model construction, deterministic preprocessing, backbone fitting, construction of the prediction-and-representation bank, and fitting of the shared ElasticNet or regional anchor models. Prediction time covers the full held-out batch, including query representations, neighbour retrieval and residual ElasticNet fitting for the global variant, or anchor selection and coefficient blending for the regional variant, followed by class-score clipping, probability normalization, and contribution construction. Metric evaluation is excluded. Seeds are averaged within each dataset before datasets are weighted equally. For reference, the one-estimator \tabiclvtwo operating point attains $0.8710$ accuracy with $0.485$~s fit and $0.745$~s prediction time; the fast and regional variants therefore predict faster than the eight-estimator row in Table~\ref{tab:runtime}, but not than its single-estimator counterpart.

\paragraph{Regional anchor configuration.}
\label{app:regional_anchors}
The regional variants in Tables~\ref{tab:main}, \ref{tab:runtime}, and
\ref{tab:regression}, and the diagnostics below, use up to 64 anchors.
We apply $k$-means to normalized reference-bank representations,
select the training row closest to each cluster centre,
and remove duplicate selections.
Each classification anchor model fits the backbone probability targets on its
200 nearest reference rows using weighted squared loss and
ElasticNet with $\alpha=0.05$ and $\rho=0.5$.
Neighbour weights use a softmax over cosine similarities with
temperature $0.1$.
For each query, the eight most similar anchors receive softmax
blending weights with temperature $0.05$.
Their coefficients and intercepts are combined before computing
class probabilities and feature contributions.
For regression, the same construction fits standardized scalar out-of-fold
teacher predictions and omits probability clipping and normalization.
Anchor fitting uses at most 300 iterations in Tables~\ref{tab:main} and
\ref{tab:regression} and in the diagnostics below,
and 50 in Table~\ref{tab:runtime}. The runtime measurements use the same L4
protocol described above.

\paragraph{Regional explanation diagnostics.}
\label{app:regional_explanation_results}
We treat the anchor model as a computational variant rather than a coequal
explanation method because its fitted bank often produces an all-zero
predicted-class contribution vector. On the synthetic tasks, its support
F1/accuracy pairs are $0.040/0.659$, $0.069/0.689$, $0.067/0.784$, and
$0.204/0.708$ on E1--E4, respectively. Its mean support F1 is
$0.095\pm0.008$ and its mean accuracy is $0.710\pm0.009$; only $26.7\%$ of
queries have nonzero coefficient vectors ($14.5\%$, $20.3\%$, $22.9\%$, and
$49.0\%$ on E1--E4), with a median of one selected feature when nonempty.
Predictions can nevertheless vary through query-dependent routing and blended
intercepts.

On the real-data benchmark, the regional model obtains deletion Spearman
$0.381$, Gen-SVM $0.715$, Gen-kmeans $0.632$, neighbour stability $0.044$,
and 1.828~ms/query online latency. Only 281 of 500 queries have nonzero
contribution vectors; deletion and stability cover eight datasets, while
generalization covers seven. These values use the regional model's own valid
population and are therefore descriptive rather than directly comparable to
Table~\ref{tab:faithfulness}.

\subsection{Explanation Benchmark Setup}
\label{app:explanation_benchmark}

The explanation benchmark used in the main paper is a fixed 10-dataset subset selected before score-function tuning. Each dataset uses a stratified 80/20 split with seed 42, with the training and test sets capped at 1600 and 400 rows, and 50 explained test queries per dataset drawn with query seed 123 (500 queries in total). The subset, split, and query indices were then held fixed for all reported comparisons. The datasets are: \texttt{balance-scale}, \texttt{banknote-authentication}, \texttt{har}, \texttt{mnist\_784}, \texttt{kr-vs-kp}, \texttt{vowel}, \texttt{eucalyptus}, \texttt{texture}, \texttt{letter}, and \texttt{jungle\_chess\_2pcs\_raw\_endgame\_complete}.

\subsection{Explanation Baselines and Deletion Protocol}
\label{app:explanation_protocol}

The post-hoc explanation methods and \oursglobal target the same complete \oursglobal wrapper; \oursreg targets its own deployed regional wrapper. They are evaluated on the same ordered query blocks, with the same masking operator and the same fill value for a removed feature: the training-set mean of that raw feature on the corresponding split. The class predicted by the relevant unperturbed wrapper is frozen and used as the explained class for class-specific methods. Features are ranked once per query and removed cumulatively in that order to a depth of $\min(10,D_{\rm raw})$, and we record the drop in the frozen-class probability after each step. During a perturbation, the original ordered query block is fixed and only the focal row is replaced. Prediction calls rerun the full relevant wrapper because the query representation and, for \oursglobal, the neighborhood and residual correction may change. This is a deployment-oriented benchmark rather than a test of causal relevance.

For LIME, we use \texttt{LimeTabularExplainer} on the original training table with \texttt{discretize\_continuous=False}, \texttt{sample\_around\_instance=True}, and 500 perturbation samples per explained query. For SHAP, we use KernelSHAP with a random background subset of 32 training examples and 64 evaluation samples per explained query, seeded independently per query. The global-residual wrapper is not differentiable end-to-end because it contains top-$k$ retrieval and a per-query ElasticNet fit. Integrated Gradients (24 steps), GradientSHAP (24 samples), and Input$\times$Gradient are therefore computed on a differentiable surrogate, a two-hidden-layer MLP (width 256) fitted to the wrapper's class probabilities on the training rows only. The reported gradient rows consequently conflate attribution behavior with surrogate approximation error; held-out surrogate fidelity was not separately measured. FastSHAP and experimental adaptations of INVASE, L2X, and REAL-X are trained on the training rows with the wrapper's class probabilities as targets. These settings are kept fixed across datasets.

For \oursglobal, processed-feature contributions are aggregated to raw semantic columns using the preprocessing map. \oursglobal, LIME, KernelSHAP, FastSHAP, and the surrogate-gradient methods return a signed score per raw feature for the frozen class, which we rank in descending order. INVASE, L2X, and REAL-X instead return nonnegative, class-agnostic selector scores in these adaptations; we rank those scores in descending order. Their rows therefore evaluate selected-feature rankings, not signed class contributions.

\subsection{Interpretability Metric Definitions}
\label{app:metric_definitions}

Let $\mathcal Q$ index the queries in the 10-dataset interpretability
benchmark. For $q\in\mathcal Q$, let
$\mathbf x_q\in\mathbb R^{D_{\rm raw}}$ be the raw query,
$\widehat y_q\in\{1,\ldots,C\}$ its frozen predicted class, and
$\boldsymbol\psi:\mathbb R^{D_{\rm raw}}\rightarrow
\mathbb R^{D_{\rm raw}}$ the raw-feature attribution map for class
$\widehat y_q$ produced by a class-specific method, or the selector-score map
for a selector method. We abbreviate its value
at the query as $\boldsymbol\psi^{(q)}:=\boldsymbol\psi(\mathbf x_q)$ and let
$\pi_q$ be the permutation of $\{1,\ldots,D_{\rm raw}\}$ that orders features
by decreasing method-specific score $\psi_j^{(q)}$, with deterministic tie breaking.
Let $f_c(\mathbf x)\in[0,1]$ denote the probability that the deployed wrapper
for the evaluated row assigns to class $c$ (\oursglobal for the non-regional
rows and \oursreg for its own row), and let
$\mu_j=|\mathcal T|^{-1}\sum_{\mathbf x\in\mathcal T}x_j$ be the mean of raw
feature $j$ over the corresponding training split $\mathcal T$. Every metric
is first averaged over the valid queries of a dataset and then averaged with
equal weight across the contributing datasets. Undefined values are excluded rather than
set to zero. The non-regional Gen-SVM, Gen-kmeans, and neighbour-stability
entries use the 493-query intersection for which every included method returns
a finite, nonzero attribution vector. Their deletion entry additionally
requires a defined deletion correlation, leaving 422 common-valid queries
across nine datasets. The regional row instead uses its own defined population:
281 queries across eight datasets for deletion and neighbour stability. Its
generalization entries have 556 valid calibration rows and 281 valid evaluation
rows, with both metrics defined on seven datasets. Consequently, the regional
quality entries are not on the same query population as the non-regional rows.

\paragraph{Deletion monotonicity.} Let $M=\min(10,D_{\rm raw})$ and let $\mathbf{x}_q^{(m)}$ denote the query with its first $m$ features under ranking $\pi_q$ replaced by their training-set means. Define the cumulative probability drop
\[
  \Delta_m^{(q)} \;=\; f_{\widehat y_q}(\mathbf{x}_q) - f_{\widehat y_q}(\mathbf{x}_q^{(m)}),
  \qquad m = 1, \ldots, M.
\]
Deletion Spearman is the per-query correlation between $m$ and $\Delta_m^{(q)}$. Higher values indicate that progressive removal of highly ranked features produces a more consistently increasing decrease in the predicted-class probability. It measures monotonicity rather than the magnitude of the decrease: an arbitrarily small but consistently increasing drop can score highly. Deletion-based evaluation is used broadly in prior work~\citeauthorp{petsiuk2018rise,hooker2019roar,deyoung2020eraser}; these citations do not imply that each work uses this exact Spearman statistic.

\paragraph{Gen-SVM and Gen-kmeans.} Following the local feature-selection evaluation of LSPIN~\citeauthorp{yang2022locally}, we use the budget $K=\min(10,\max(1,D_{\rm raw}-1))$. Let $J_q^{(K)}$ be the set containing the first $K$ indices under $\pi_q$, and let $\mathbf{x}_q^{J_q^{(K)}}$ denote the masked input that retains $\mathbf{x}_q$ on $J_q^{(K)}$ and replaces every other entry $j$ with $\mu_j$. \emph{Gen-SVM} trains a linear SVM ($C=1$) on a calibration block of 100 training rows, each masked by its own explanation and labeled with the wrapper's predicted class. It reports agreement with $\widehat y_q$ from the masked evaluation queries. \emph{Gen-kmeans} runs $k$-means with $C$ clusters on the masked evaluation queries and reports the accuracy of the Hungarian-matched cluster assignment against the true labels. The two metrics therefore target different labels and are reported separately. Moreover, because every row is masked according to its own selected set, Gen-SVM can exploit the pattern of retained locations as well as retained values; we did not run a mask-only control.

\paragraph{Neighbour stability.} Neighbour stability compares normalized attribution differences between nearby observed queries. Write $\mathbf u=\boldsymbol\psi/\|\boldsymbol\psi\|_1$. Let $\widetilde{\mathbf h}_q$ be the retrieval representation of query $q$, standardized with the training-row mean and standard deviation, and let $\mathcal N_3(q)$ contain the three nearest other evaluation queries in Euclidean distance. Then
\[
  L_{\text{nb}}(\mathbf x_q)=\max_{q'\in\mathcal N_3(q)}
  \frac{\|\mathbf u(\mathbf x_q)-\mathbf u(\mathbf x_{q'})\|_2}
       {\|\widetilde{\mathbf h}_q-\widetilde{\mathbf h}_{q'}\|_2},
\]
and lower values indicate more similar explanations for nearby queries.

\subsection{Feature-Importance Scoring}
\label{app:faithfulness_details}

For class $c$, the processed-feature contribution is
$a_{j,c}(\mathbf{x}^*)=(z_j^*-\overline z_j)W^*_{j,c}$ with
$\mathbf W^*=\mathbf P+\mathbf D^*$. The raw-column contribution sums these
terms over the common origin map $g$ defined in
Appendix~\ref{app:preprocessing_mapping}. Unless a protocol fixes $c$ in
advance, we use the class predicted for the unperturbed query. Raw
contributions are ranked by signed value in the real-data benchmark
(Appendix~\ref{app:explanation_protocol}); the synthetic support-recovery
tasks instead rank the absolute transformed-space combined coefficients after
associating them with their raw origins (Appendix~\ref{app:lspin_tasks}).

For the synthetic-task notation below,
$\sigma(t):=(1+\exp(-t))^{-1}$ is the logistic function,
$\mathcal N(\mu,v)$ denotes a normal distribution with mean $\mu$ and variance
$v$, $q$ indexes a test query, and $S_q$ is its designated generating-support
set of raw-feature indices.
\subsection{LSPIN Synthetic Tasks: E1--E4}\label{app:lspin_tasks} Tasks E1--E4 are regime-dependent binary classification benchmarks adapted from the LSPIN synthetic evaluation~\citeauthorp{yang2022locally}. Each task is generated with ten independent seeds, 21--30. Unless noted otherwise, all features are drawn from independent standard normal distributions, and labels follow $y \sim \mathrm{Bernoulli}(\sigma(\ell))$ where $\sigma$ is the logistic function and $\ell$ is the task-specific logit defined below. \par \paragraph{E1--E3: regime-switching with a gate feature (11 features).} Each task uses $n = 2000$ samples with features $x_1, \ldots, x_{11} \overset{\text{i.i.d.}}{\sim} \mathcal{N}(0, 1)$. The gate feature $x_{11}$ determines which rule is active for each instance: the \emph{gate branch} applies when $x_{11} < 0$ and the \emph{off-gate branch} applies otherwise. Because $x_{11}$ governs branch selection, it is included in the local support $S_q$ of every query. Training uses $1800$ samples and testing uses $200$ samples; $100$ test queries are sampled per seed. \par \noindent\textbf{E1.} Gate branch ($x_{11} < 0$): $\ell = x_1 x_2$; \quad support $S_q = \{x_1,\, x_2,\, x_{11}\}$ ($|S_q|=3$). Off-gate branch ($x_{11} \geq 0$): $\ell = x_3^2 + x_4^2 + x_5^2 + x_6^2 - 4$; \quad support $S_q = \{x_3,\, x_4,\, x_5,\, x_6,\, x_{11}\}$ ($|S_q|=5$). \par \noindent\textbf{E2.} Gate branch ($x_{11} < 0$): $\ell = x_1 x_2$; \quad support $S_q = \{x_1,\, x_2,\, x_{11}\}$ ($|S_q|=3$). Off-gate branch ($x_{11} \geq 0$): $\ell = -10\sin(0.2\,x_7) + |x_8| + x_9 + e^{-x_{10}} - 2.4$; \quad support $S_q = \{x_7,\, x_8,\, x_9,\, x_{10},\, x_{11}\}$ ($|S_q|=5$). \par \noindent\textbf{E3.} Gate branch ($x_{11} < 0$): $\ell = x_3^2 + x_4^2 + x_5^2 + x_6^2 - 4$; \quad support $S_q = \{x_3,\, x_4,\, x_5,\, x_6,\, x_{11}\}$ ($|S_q|=5$). Off-gate branch ($x_{11} \geq 0$): $\ell = -10\sin(0.2\,x_7) + |x_8| + x_9 + e^{-x_{10}} - 2.4$; \quad support $S_q = \{x_7,\, x_8,\, x_9,\, x_{10},\, x_{11}\}$ ($|S_q|=5$). \par \paragraph{E4: two-group design with 46 nuisance features (50 features total).} E4 uses $n = 1000$ samples drawn from two equally-sized groups. Features $x_5, \ldots, x_{50}$ are independent nuisance variables with $x_j \sim \mathcal{N}(0, 0.5)$ (variance $0.5$). The first four features form the informative core, with group-shifted distributions that make group membership recoverable from the data: \par \noindent\textbf{Group A} ($n = 500$): $x_1, x_2, x_3, x_4 \overset{\text{i.i.d.}}{\sim} \mathcal{N}(+1,\, 0.5)$. Logit: $\ell = x_1 x_2 - 0.9$. Generating-group conditional support: $S_q = \{x_1,\, x_2\}$ ($|S_q|=2$). \par \noindent\textbf{Group B} ($n = 500$): $x_1, x_2, x_3, x_4 \overset{\text{i.i.d.}}{\sim} \mathcal{N}(-1,\, 0.5)$. Logit: $\ell = x_3^2 + x_4^2 - 2.5$. Generating-group conditional support: $S_q = \{x_3,\, x_4\}$ ($|S_q|=2$). \par Support follows the latent generating group. Because all four core variables shift with group, variables outside this conditional support can reveal group membership and therefore carry information about the observable $P(y\mid\mathbf x)$; $S_q$ should not be read as its unambiguous support. We randomly split each group into 450 training and 50 test rows, yielding 900 training rows and 100 balanced test queries. \par \paragraph{Evaluation protocol.}
All tasks use one \tabiclvtwo estimator, one representation view, the primary reference bank, and squared loss. The \oursglobal row uses $k=400$, $\tau=0.1$, $\alpha=0.01$, and $\rho=0.25$; \oursreg uses the anchor configuration in Appendix~\ref{app:regional_anchors}. For query $q$, let $A_q$ contain up to $|S_q|$ raw-feature indices with the largest nonzero absolute predicted-class coefficients after associating each processed coefficient with its raw-column origin. The reported metric is conventional set F1,
\[
 \operatorname{F1}_q=\frac{2|A_q\cap S_q|}{|A_q|+|S_q|},\qquad
 \operatorname{F1}_q=0\ \text{when }A_q=\varnothing.
\]
For post-hoc explainers, the selected features are used to compute masked-\tabiclvtwo{} accuracy. Results average queries and seeds per task. Aggregate standard errors use ten seed-level averages over the four tasks.

\oursglobal ranks nonzero absolute combined raw coefficients; MAPLE ranks absolute local coefficients; EBM ranks absolute additive scores (no interactions); and logistic regression ranks absolute mean-relative standardized contributions. LIME, KernelSHAP, Integrated Gradients, and GradientSHAP use the absolute scores returned for the teacher's predicted class. Interpretable models report their own accuracy; post-hoc methods report \tabiclvtwo accuracy after mean-masking unselected features. MAPLE selects its feature count on a validation split within the training data.

\paragraph{Gate-feature interpretation.}
The gate $x_{11}$ belongs to every E1--E3 generating support, but may have a small combined contribution when a neighborhood lies within one regime. Generating support and local contribution magnitude therefore need not coincide.

\section{Additional Background and Related Work}
\label{app:related_work}

We organize prior work along three axes that are central to this paper: whether the predictor is \emph{interpretable by design} or requires post-hoc analysis, whether the predictive form exposes explicit feature contributions, and whether predictions adapt locally to each query. We distinguish post-hoc explanations, which analyze a fixed predictor after inference, from interpretability by design, where the prediction itself is computed through an inspectable structure. Table~\ref{tab:method_positioning} summarizes representative methods under this view and positions \ours as an interpretable-by-design, global-plus-local linear predictor.

\paragraph{Tabular in-context learning models.}
Recent work has introduced a new paradigm for tabular prediction in which a labeled dataset is provided as context at inference time. Given a labeled context $\{(\mathbf{x}_i, y_i)\}_{i=1}^N$ with raw input $\mathbf{x}_i \in \mathbb{R}^{D_{\rm raw}}$ and class label $y_i \in \{1, \ldots, C\}$, and a query input $\mathbf{x}^* \in \mathbb{R}^{D_{\rm raw}}$, these models directly predict
\[
p_\theta\!\left(y^* \mid \mathbf{x}^*, \{(\mathbf{x}_i, y_i)\}_{i=1}^N\right)
\]
in a single forward pass, without dataset-specific training. Here $y^*$ is the
unknown query label, $\theta$ denotes the pretrained model parameters, and
$p_\theta(\cdot\mid\cdot)$ is the resulting conditional class distribution.
This corresponds to in-context learning, in which adaptation to a new task is
performed by conditioning on labeled examples rather than by gradient updates.

During pretraining, models are exposed to a distribution of datasets and optimized to predict labels for held-out examples given their context, encouraging a general-purpose inference procedure. Architecturally, different designs vary in how tabular data is represented and how interactions are modeled. TabPFN~\citeauthorp{hollmann2023tabpfn} represents rows as tokens and performs attention across samples, while TabPFNv2~\citeauthorp{hollmann2025tabpfnv2} introduces a cell-based design with alternating row and column attention, increasing flexibility at a higher computational cost. \tabicl~\citeauthorp{qu2025tabicl} improves scalability through a two-stage design that first constructs row embeddings and then performs attention over them. Throughout this work, we use \tabiclvtwo~\citeauthorp{qu2026tabiclv2}, the successor of \tabicl, as our backbone and primary baseline.

These models are strong predictors, but their decision-making process is not interpretable by design: predictions are computed via multi-layer attention rather than through explicit feature-level operations. In Table~\ref{tab:method_positioning}, they therefore lie outside the interpretable-by-design, linear, and local region. Rather than attempting to explain these models post hoc, \ours reuses their context-conditioned representation geometry to define locality, while constructing a separate predictor whose computation is interpretable by design.

\paragraph{Post-hoc local explanation methods.}
Post-hoc explanation methods approximate a trained predictor after it has produced a prediction. Model-agnostic approaches such as LIME~\citeauthorp{ribeiro2016lime} and SHAP~\citeauthorp{lundberg2017shap} construct local surrogate models or estimate feature attributions by perturbing the input. These methods operate at the input-output level and do not expose the internal computation that produces the prediction. As a result, they require repeated evaluations of an external predictor and define locality independently of its internal representation.

Gradient-based attribution methods, including Integrated Gradients~\citeauthorp{sundararajan2017integrated}, DeepLIFT and Input $\times$ Gradient~\citeauthorp{shrikumar2017learning}, and GradientSHAP-style estimators~\citeauthorp{kokhlikyan2020captum}, instead use differentiability to assign feature importance scores. A separate line of work amortizes explanation through learned selector--predictor models that return instance-wise feature subsets, including L2X~\citeauthorp{chen2018l2x}, INVASE~\citeauthorp{yoon2019invase}, REAL-X~\citeauthorp{jethani2021realx}, and FastSHAP~\citeauthorp{jethani2022fastshap}; ILLUME~\citeauthorp{piaggesi2025illume} similarly learns meta-representations for local surrogate explanations. LIME, SHAP, and gradient attribution analyze a fixed predictor after inference. The original L2X, INVASE, and REAL-X formulations instead train selector and predictor components jointly or in coordinated objectives; our Table~\ref{tab:faithfulness} adaptations train their selectors against the fixed \oursglobal wrapper outputs. In each case, the selector or explainer score is distinct from the explicit rule used by that wrapper to make its prediction.

\paragraph{Interpretable-by-design models and local predictors.}
Interpretable-by-design models expose feature-level structure directly in the deployed predictor. Classic examples include sparse linear models, such as Least Absolute Shrinkage and Selection Operator (Lasso)~\citeauthorp{tibshirani1996lasso}, and additive models, such as Explainable Boosting Machines (EBMs) and Neural Additive Models (NAMs)~\citeauthorp{nori2019interpretml,agarwal2021nam}. These models are interpretable by design and often partially linear, but they are typically global: a single model is applied across all queries.

Local modeling approaches instead fit simple predictors in a neighborhood around each query. Methods such as LOESS~\citeauthorp{cleveland1988loess} and local polynomial regression~\citeauthorp{fan1996local} often use input-space distance. MAPLE~\citeauthorp{plumb2018maple} learns a supervised forest geometry and derives query-dependent proximity weights before fitting a local model. Its forest is fixed after training, but its neighborhood is label-informed and changes with the query. In contrast, \ours recomputes neighborhoods from representations conditioned on the current labeled ICL context; the distinction is the context-conditioned geometry, not an absence of query adaptation in MAPLE.

LSPIN and its locally linear variant LLSPIN~\citeauthorp{yang2022locally} obtain query adaptation differently. Building on stochastic-gate feature selection~\citeauthorp{yamada2020stg}, a gating network trained jointly with the prediction network maps each sample to gates that select a sparse subset of its features. When the prediction network is linear (LLSPIN), the deployed model is a sample-specific sparse linear rule whose coefficients are the shared weights multiplied by the sample's gates. Locality is thus learned from the labels by a dataset-specific gating network, whereas \ours obtains it from the representation of a pretrained in-context model and fits each local rule with a per-query ElasticNet. Our synthetic tasks and selected-feature utility metrics are adapted from this work (Section~\ref{sec:synthetic_ground_truth} and Appendix~\ref{app:metric_definitions}).

\ours differs from these prior directions in how locality is defined and reused. A shared ElasticNet model captures dataset-level structure, while context-conditioned representations from a pretrained tabular in-context learner determine which training rows fit each query's residual correction. The deployed predictor is the sum of this global coefficient matrix and a query-specific ElasticNet correction in the processed feature space. It is therefore interpretable by design and query-adaptive, with locality grounded in the foundation model's learned geometry rather than raw input distances.

\paragraph{Interpretable extensions of tabular in-context models.}
Recent work has explored incorporating explanations directly into tabular in-context models. ShapPFN~\citeauthorp{sena2026shappfn}, a TabPFN-style model pretrained on synthetic datasets, decodes a base term and per-feature, per-class contributions from the test-row embeddings and sums them to obtain its logits. Its prediction is therefore additive over features by construction, and predictions and contributions are produced in a single forward pass. ExplainerPFN~\citeauthorp{fonseca2026explainerpfn} instead targets model-free feature-importance estimation: it predicts Shapley-style attributions for a new dataset without access to the model being explained.

ShapPFN is interpretable by design in the additive sense of Table~\ref{tab:method_positioning}, because its displayed contributions sum to its output. Each contribution, however, is computed by attention layers over the full labeled context, so how a term depends on its feature value is not itself inspectable, and no explicit feature-level rule is deployed. In \ours, the contribution of processed feature $j$ to class $c$ is the product $(z_j^*-\overline z_j)W^*_{j,c}$ of the centered feature value and an explicit coefficient; the foundation model only supplies prediction targets and defines the neighborhood. ExplainerPFN, like other post-hoc methods, produces attributions that do not determine a deployed prediction.

\paragraph{Example-based explanations.}
Example-based methods explain predictions by identifying similar training instances, as in prototype and metric-based approaches~\citeauthorp{snell2017prototypical}. These methods provide sample-level evidence but do not produce explicit feature-level contributions. \ours retains the neighborhood view but uses it to adapt a shared linear predictor through a residual correction, yielding both sample-level locality and feature-level contributions within a single interpretable-by-design model.

\section{Explanation Runtime Accounting}
\label{app:explanation_runtime_accounting}

The runtime column of Table~\ref{tab:faithfulness} reports equal-dataset means.
The online boundary includes the complete predictor call and construction of
the attribution, but excludes one-off model and explainer setup.  TICL and the
seven learned or gradient explainers use the same 50 original queries per
dataset, measured with device synchronization over three repeats on two
NVIDIA A100 80\,GB GPUs.  LIME and KernelSHAP use the first 10 fixed original
queries from each of the same 10 datasets, also with three isolated,
device-synchronized repeats; their attribution times are added to the common
predictor latency measured on the 50-query blocks.  No stability-perturbed rows
enter this timing subset.  LIME retains 500 perturbations per explanation and
KernelSHAP retains 64 evaluation samples with a 32-row background. Extrapolating
the measured equal-dataset mean attribution times to 500 queries and three
repetitions gives approximately 28.5 A100-hours for LIME and KernelSHAP
combined, motivating the fixed 100-query timing subset.

The exact query-parallel \oursglobal implementation prepares the GPU query state once
per call and parallelizes only the independent residual ElasticNet solves.  It
uses the primary $k=400$, 300-iteration, $10^{-4}$-tolerance configuration.  On
all ten datasets and all repeats, its predicted classes matched the frozen
classes, and its probabilities, coefficients, and contributions had zero
measured difference across repeats.  Its equal-dataset mean joint prediction
and contribution-construction latency is 26.974~ms/query.  The corresponding
latency of \oursreg is 1.828~ms/query; all 500 outputs are finite, while 281
have a nonzero predicted-class contribution vector. The corresponding
end-to-end means are 27.197 for FastSHAP, 27.482 for Integrated Gradients,
27.560 for GradientSHAP, 27.112 for Input$\times$Gradient, 27.191 for INVASE,
27.213 for L2X, and 27.192~ms/query for REAL-X.  After adding the common
predictor call, the matched LIME and KernelSHAP measurements are 14,607.496
and 53,686.935~ms/query, respectively.  Across the three complete repeats, the
standard deviations of the equal-dataset means are 19.407 and 149.744~ms/query.

One-off method-specific construction is kept separate from online latency
because its per-query amortization depends on deployment volume.  The measured
equal-dataset mean construction times are 0.003~s for LIME, 1.024~s for
KernelSHAP, 58.366~s for FastSHAP, 104.961~s for
the shared differentiable surrogate used by Integrated Gradients,
GradientSHAP, and Input$\times$Gradient, 41.077~s for INVASE, 41.181~s for L2X,
and 71.341~s for REAL-X.  The gradient-surrogate values retain their cached
construction provenance; all other construction values are means of the three
new construction measurements.  The backbone, reference-bank, and
shared-ElasticNet fit is common to every method and is not charged only to
\oursglobal.

Panel~(c) of Figure~\ref{fig:selling-point} adds method-specific construction
to the 50-query online workload while excluding shared predictor construction.
This gives 1.349~s for \oursglobal and 42.436~s for
INVASE, the fastest post-hoc total, corresponding to a $31.5\times$ advantage.
The analogous LIME and KernelSHAP totals, extrapolated from their fixed
10-query-per-dataset timing subset, are 730.378 and 2,685.371~s, or
$542\times$ and $1,991\times$ the \oursglobal total.

\subsection{Component and Configuration Ablations}
\label{app:component_ablation}

We evaluate nine configurations on the ten explanation-benchmark
datasets, using seed 42, full-test accuracy and 50 fixed explanation
queries per dataset. All configurations share the same single-estimator
FP32 TabICLv2 teacher and batched five-fold OOF reference bank.
Unlike the ablation, the regional evaluation in
Appendix~\ref{app:regional_explanation_results} uses automatic mixed precision
and sequential five-fold OOF. The separate regional evaluation and this
ablation use the same 500 query identities but different fitted banks, yielding
281 and 277 nonzero regional explanations, respectively.
Explanation metrics follow Appendix~\ref{app:metric_definitions},
with each configuration explaining its own predictor.

The primary configuration combines a shared global ElasticNet with
a local residual correction using $k=400$ and at most 300 iterations.
Matched local-only fitting removes the global component while retaining
the local neighbourhood, regularization and iteration budget.
Global-only fitting removes the local correction.
The regional comparison holds the fitted anchor bank fixed and changes
only the blending rule.

\begin{table}[t]
  \centering
  \caption{Component and configuration ablations.
  Bold identifies the primary configuration, not column-wise best results.
  Iteration caps apply to the local correction; shared global fits retain
  300 iterations. A nonzero explanation has a finite contribution vector
  with maximum absolute contribution above $10^{-12}$.
  Explanation metrics use configuration-specific valid populations and
  equally weight contributing datasets.
  Gen-SVM targets each predictor's own classes; Gen-kmeans uses true labels.
  Evaluation counts appear in Table~\ref{tab:ablation_valid_counts}.}
  \label{tab:component_ablation}
  \small
  \setlength{\tabcolsep}{4pt}
  \resizebox{\linewidth}{!}{%
  \begin{tabular}{lrrrrr}
    \toprule
    Configuration &
    Acc.\ (\%)$\uparrow$ &
    Nonzero expl.\ (\%)$\uparrow$ &
    Deletion Spearman$\uparrow$ &
    Gen-SVM (\%)$\uparrow$ &
    Gen-kmeans (\%)$\uparrow$ \\
    \midrule
    \multicolumn{6}{l}{\textit{Global and local components}} \\
    Local ElasticNet only ($k=400$, matched)
      & 90.18 & 68.0 & 0.5207 & 70.72 & 55.77 \\
    Global ElasticNet only
      & 76.13 & 100.0 & 0.6097 & 95.40 & 67.80 \\
    \textbf{TICL --- primary ($k=400$, 300 iterations)}
      & \textbf{89.90} & \textbf{100.0} & \textbf{0.5297}
      & \textbf{88.80} & \textbf{71.40} \\
    \midrule
    \multicolumn{6}{l}{\textit{Local reference configuration}} \\
    Local ElasticNet reference ($k=100$)
      & 90.29 & 41.8 & 0.2807 & 53.03 & 52.97 \\
    \midrule
    \multicolumn{6}{l}{\textit{Correction neighbourhood and iteration cap}} \\
    TICL ($k=30$, 300 iterations)
      & 91.19 & 100.0 & 0.3993 & 81.60 & 69.40 \\
    TICL ($k=400$, 30 iterations)
      & 89.92 & 100.0 & 0.5323 & 89.20 & 72.20 \\
    TICL ($k=30$, 30 iterations)
      & 90.97 & 100.0 & 0.4194 & 82.60 & 69.60 \\
    \midrule
    \multicolumn{6}{l}{\textit{Regional blending}} \\
    Regional --- nearest anchor
      & 87.88 & 44.6 & 0.2867 & 61.27 & 52.06 \\
    Regional --- eight-anchor blend
      & 89.21 & 55.4 & 0.3975 & 71.69 & 61.83 \\
    \bottomrule
  \end{tabular}%
  }
\end{table}

\paragraph{Global and local components.}
Adding the local correction raises accuracy by 13.77 percentage points
over global-only fitting. Against matched local K400, the primary model
produces more nonzero explanations but has slightly lower accuracy.
Its paired Deletion Spearman difference is $0.027$, with a 95\%
dataset-bootstrap interval of $[-0.078,0.114]$, indicating no clear
difference in deletion monotonicity.
Higher reported Gen scores use different eligible populations and
should not be interpreted as matched comparisons.
Global-only fitting has higher deletion and Gen-SVM scores despite
lower accuracy, illustrating the need to assess explanation diagnostics
alongside predictive performance.
Local K100 is a reference setting rather than an isolated component
ablation because its regularization also differs.

\paragraph{Neighbourhood and iteration cap.}
At 300 iterations, K30 improves accuracy by 1.29 percentage points,
while K400 improves paired Deletion Spearman by
$0.130\,[0.027,0.234]$.
This supports an accuracy--deletion-monotonicity trade-off on this subset.
At K400, 30 iterations yield similar observed results:
the paired Spearman difference for 30 minus 300 iterations is
$-0.0003\,[-0.0104,0.0093]$.
The larger cap therefore has no demonstrated advantage in this
experiment; the interval does not establish equivalence.

\paragraph{Regional blending.}
Using eight anchors instead of the nearest anchor raises accuracy
from 87.88\% to 89.21\% and the fraction of nonzero explanations
from 44.6\% to 55.4\%.
Its paired Deletion Spearman difference is
$0.031\,[-0.116,0.157]$, so improved deletion monotonicity is not
established. This comparison isolates blending within the regional
model, not regionalization against the primary model.
These ablations do not measure runtime or establish a universally
optimal configuration.

\paragraph{Settings.}
Matched local K400 and all global models use $\alpha=0.01$ and
L1 ratio $0.25$; local K100 uses $\alpha=0.05$ and L1 ratio $0.5$.
Local and global models use safeguarded FISTA with tolerance $10^{-4}$;
neighbour weighting uses temperature $0.1$.
Shared global and local-only fits retain 300-iteration caps.
Regional variants share up to 64 anchors fitted on 200 neighbours
with FISTA, 300 iterations, $\alpha=0.05$ and L1 ratio $0.5$.
Eight-anchor blending uses temperature $0.05$.
These settings are distinct from runtime configurations using
different solvers or iteration budgets.

\paragraph{Valid populations and uncertainty.}
Empty explanations and undefined correlations are excluded rather
than assigned zero. The primary model's 12 undefined deletion
correlations have constant probability-drop curves; all its
explanations are nonzero.
The primary value 0.5297 above uses its 488 own-valid queries across all
ten datasets. Table~\ref{tab:faithfulness} instead reports 0.470 on the
422-query cross-method intersection over nine contributing datasets; on
its own valid population there, the primary score is 0.5283.
Paired comparisons use common-valid query indices, with each model's
original predicted class fixed separately.
Intervals use 10,000 dataset-bootstrap resamples; they are exploratory,
unadjusted for multiple comparisons and conditional on one split seed.

\begin{table}[t]
  \centering
  \caption{Evaluation populations for the ablation.
  Every configuration has 500 attempted explanation queries and
  full-test accuracy on all ten datasets.
  Dataset counts refer to deletion Spearman, Gen-SVM and Gen-kmeans,
  respectively.}
  \label{tab:ablation_valid_counts}
  \small
  \begin{tabular}{lrr}
    \toprule
    Configuration & Valid deletion /500 & Contributing datasets \\
    \midrule
    Matched local K400 & 336 & 9 / 9 / 9 \\
    Global-only & 500 & 10 / 10 / 10 \\
    TICL --- primary & 488 & 10 / 10 / 10 \\
    Local K100 & 209 & 7 / 6 / 6 \\
    TICL ($k=30$, 300 iterations) & 493 & 10 / 10 / 10 \\
    TICL ($k=400$, 30 iterations) & 488 & 10 / 10 / 10 \\
    TICL ($k=30$, 30 iterations) & 491 & 10 / 10 / 10 \\
    Regional --- nearest anchor & 222 & 7 / 5 / 6 \\
    Regional --- eight-anchor blend & 277 & 8 / 7 / 7 \\
    \bottomrule
  \end{tabular}
\end{table}